\def\eqref#1{equation~\ref{#1}}
\def\1{\bm{1}}
\DeclareMathAlphabet{\mathsfit}{\encodingdefault}{\sfdefault}{m}{sl}
\SetMathAlphabet{\mathsfit}{bold}{\encodingdefault}{\sfdefault}{bx}{n}
\newtheorem{claim}{Claim}
\newcommand{\yes}{\ding{51}}
\theoremstyle{plain}
\newtheorem{theorem}{Theorem}[section]
\newtheorem{proposition}[theorem]{Proposition}
\theoremstyle{definition}
\theoremstyle{remark}
\icmltitlerunning{A New Perspective on the Effects of Spectrum in Graph Neural Networks}
\begin{document}
	
	\twocolumn[
	\icmltitle{A New Perspective on the Effects of Spectrum in Graph Neural Networks}
	
	
	
	
	\begin{icmlauthorlist}
		\icmlauthor{Mingqi Yang}{dlut}
		\icmlauthor{Yanming Shen}{dlut}
		\icmlauthor{Rui Li}{dlut}
		\icmlauthor{Heng Qi}{dlut}
		\icmlauthor{Qiang Zhang}{dlut}
		\icmlauthor{Baocai Yin}{dlut,pcl}
	\end{icmlauthorlist}
	
	\icmlaffiliation{dlut}{Dalian University of Technology, China}
	\icmlaffiliation{pcl}{Peng Cheng Laboratory, China}
	
	\icmlcorrespondingauthor{Yanming Shen}{shen@dlut.edu.cn}
	
	\icmlkeywords{Machine Learning, ICML}
	
	\vskip 0.3in
	]
	
	
	
	\printAffiliationsAndNotice{}  

\begin{abstract}
	Many improvements on GNNs can be deemed as operations on the spectrum of the underlying graph matrix, which motivates us to directly study the characteristics of the spectrum and their effects on GNN performance. By generalizing most existing GNN architectures, we show that the correlation issue caused by the $unsmooth$ spectrum becomes the obstacle to leveraging more powerful graph filters as well as developing deep architectures, which therefore restricts GNNs' performance. Inspired by this, we propose the correlation-free architecture which naturally removes the correlation issue among different channels, making it possible to utilize more sophisticated filters within each channel. The final correlation-free architecture with more powerful filters consistently boosts the performance of learning graph representations. Code is available at \url{https://github.com/qslim/gnn-spectrum}.
\end{abstract}

\section{Introduction}
Although graph neural network (GNN) communities are in a rapid development of both theories and applications, there is still a lack of a generalized understanding of the effects of the graph's spectrum in GNNs. As we can see, many improvements can finally be unified into different operations on the spectrum of the underlying graph, while their effectiveness is interpreted by several well-accepted isolated concepts: \cite{pmlr-v97-wu19e,zhu2021interpreting,klicpera_predict_2019,klicpera2019diffusion,chien2021adaptive,balcilar2021analyzing} explain it in the perspective of simulating low/high pass filters; \cite{chenWHDL2020gcnii,xu2018representation,liu2020towards,li2018deeper} interpret it as ways of alleviating oversmoothing phenomenon in deep architectures; \cite{cai2020graphnorm} adopts the conception of normalization operation in neural networks and applies it to graph data.
Since these improvements all indirectly operate on the spectrum,
it motivates us to study the potential connections between the GNN performance and the characteristics of the graph's spectrum. If we can find such a connection, it would provide a deeper and generalized insight into these seemingly unrelated improvements associated with the graph's spectrum (low/high pass filter, oversmoothing, graph normalization, etc), and further identify potential issues in existing architectures.
To this end, we first consider the simple correlation metric: cosine similarity among signals, and study the relations between it and the graph's spectrum in the graph convolution operation. It provides a new perspective that in existing GNN architectures, the distribution of eigenvalues of the underlying graph matrix controls the cosine similarity among signals. An ill-posed $unsmooth$ spectrum would easily make signals over-correlated which is evidence of information loss.

Compared with oversmoothing studies~\cite{li2018deeper,oono2020graph,rong2019dropedge,huang2020tackling}, the correlation analysis associated with the graph's spectrum further indicates that the correlation issue is essentially caused by the graph's spectrum. In other words, for graph topologies with an unsmooth spectrum, the issue can appear even with a shallow architecture, and a deep model further makes the spectrum less smooth and eventually exacerbates this issue. Meanwhile, the correlation analysis also provides a unified interpretation of the effectiveness of various existing improvements associated with the graph's spectrum since they all implicitly impose some constraints on the spectrum to alleviate the correlation issue. However, these improvements are trade-offs between alleviating the correlation issue and applying more powerful graph filters: since a filter implementation directly reflects on the spectrum, a more appropriate filter for relevant signal patterns may correspond to an ill-posed spectrum, which in return will not gain performance improvements. Hence, in general GNN architectures, the correlation issue becomes the obstacle to applying more powerful filters.
As we can see, although one can approximate more sophisticated graph filters by increasing the order $k$ of the polynomial theoretically~\cite{shuman2013emerging}, in the popular models, simple filters, e.g. low-pass filter~\cite{kipf2017semi,pmlr-v97-wu19e}, or the fixed filter coefficients~\cite{klicpera_predict_2019,klicpera2019diffusion} serve as the practical applicable choice.

With all the above understandings, the key solution is to decouple the correlation issue from the filter design, which results in our correlation-free architecture. In contrast to existing approaches, it allows to focus on exploring more sophisticated filters without the concern of the correlation issue. With this guarantee, we can improve the approximation abilities of polynomial filters to better approximate the desired more complex filters~\cite{hammond2011wavelets,defferrard2016convolutional}.
However, we also find that it cannot be achieved by simply increasing the number of polynomial bases as the basis characteristics implicitly restrict the number of available bases in the resulting polynomial filter. For this reason, commonly used (normalized) adjacency or Laplacian matrix where its spectrum serves as the basis cannot effectively utilize high-order bases. To address this issue, we propose new graph matrix representations, which are capable of leveraging more bases and learnable filter coefficients to better respond to more complex signal patterns. The resulting model significantly boosts performance on learning graph representations. Although there are extensive studies on the polynomial filters including the fixed coefficients and learnable coefficients~\cite{defferrard2016convolutional,8521593,chien2021adaptive,he2021bernnet}, to the best of our knowledge, they all focus on the coefficients design and use the (normalized) adjacency or Laplacian matrix as a basis. Therefore, our work is well distinguished from them.
Our contributions are summarized as follows:
\begin{itemize}
	\item 
	\vspace{-5pt}
	We show that general GNN architectures suffer from the correlation issue and also quantify this issue with spectral smoothness;
	\item
	\vspace{-5pt}
	We propose the correlation-free architecture that decouples the correlation issue from graph convolution;
	\item
	\vspace{-5pt}
	We show that the spectral characteristics also hinder the approximation abilities of polynomial filters and address it by altering the graph's spectrum.
\end{itemize}

\section{Preliminaries}

Let $\mathcal G=(\mathcal V, \mathcal E)$ be an undirected graph with node set $\mathcal V$ and edge set $\mathcal E$.
We denote $n = |\mathcal V|$ the number of nodes, $A\in\mathbb A^{n\times n}$ the adjacency matrix and $H\in\mathbb R^{n\times d}$ the node feature matrix where $d$ is the feature dimensionality.
$\bm h\in\mathbb R^n$ is a graph signal that corresponds to one dimension of $H$.

\textbf{Spectral Graph Convolution~\cite{hammond2011wavelets,defferrard2016convolutional}.}
The definition of spectral graph convolution relies on Fourier transform on the graph domain.
For a signal $\bm h$ and graph Laplacian $L=U\Lambda U^{\top}$, we have Fourier transform $\hat x=U^{\top}x$ and inverse transform $x=U\hat x$.
Then, the graph convolution of a signal $\bm h$ with a filter $\bm g_{\theta}$ is
\begin{equation}
	\label{equ:graph_conv}
	\begin{aligned}
		\bm g_{\theta}*\bm h=U\left(\left(U^{\top}\bm g_{\theta}\right)\odot\left(U^{\top}\bm h\right)\right)=U\hat{G}_{\theta^{\prime}}U^{\top}\bm h,
	\end{aligned}
\end{equation}
where $\hat{G}_{\theta^{\prime}}$ denotes a diagonal matrix in which the diagonal corresponds to spectral filter coefficients.
To avoid eigendecomposition and ensure scalability, $\hat{G}_{\theta^{\prime}}$ is approximated by a truncated expansion in terms of Chebyshev polynomials $T_k(\tilde{\Lambda})$ up to the $k$-th order~\cite{hammond2011wavelets}, which is also the polynomials of $\Lambda$,
\begin{equation}
	\label{equ:poly_filter}
	\begin{aligned}
		\hat{G}_{\theta^{\prime}}(\Lambda)\approx\sum_{i=0}^{k} \theta_i^{\prime} T_i(\tilde{\Lambda})=\sum_{i=0}^{k}\theta_{i}\Lambda^{i},
	\end{aligned}
\end{equation}
where $\tilde{\Lambda}=\frac{2}{\lambda_{\max}}\Lambda-I_n$.
Now the convolution in Eq.~\ref{equ:graph_conv} is
\begin{equation}
	\label{equ:poly_conv}
	\begin{aligned}
		U\hat{G}_{\theta^{\prime}}U^{\top}\bm h\approx U\left(\sum_{i=0}^{k}\theta_{i}\Lambda^{i}\right)U^{\top}\bm h=\sum_{i=0}^{k}\theta_{i}L^{i}\bm h.
	\end{aligned}
\end{equation}
Note that this expression is $k$-localized since it is a $k$-order polynomial in the Laplacian, i.e., it depends only on nodes that are at most $k$ hops away from the central node.

\textbf{Graph Convolutional Network (GCN)~\cite{kipf2017semi}.}
GCN is derived from $1$-order Chebyshev polynomials with several approximations.
The authors further introduce the renormalization trick $\tilde{D}^{-\frac{1}{2}}\tilde{A}\tilde{D}^{-\frac{1}{2}}$ with $\tilde{A}=A+I_n$ and $\tilde{D}_{i i}=\sum_{j}\tilde{A}_{i j}$.
Also, GCN can be generalized to multiple input channels and a layer-wise model:
\begin{equation}
	\label{equ:gcn_conv}
	\begin{aligned}
		H^{(l+1)}=\sigma\left(\tilde{D}^{-\frac{1}{2}}\tilde{A}\tilde{D}^{-\frac{1}{2}}H^{(l)}W^{(l)}\right),
	\end{aligned}
\end{equation}
where $W$ is learnable matrix and $\sigma$ is nonlinear function.

\textbf{Graph Diffusion Convolution (GDC)~\cite{klicpera2019diffusion}.}
A generalized graph diffusion is given by the diffusion matrix:
\begin{equation}
	\label{equ:gdc_conv}
	\begin{aligned}
		H=\sum_{k=0}^{\infty} \theta_k T^k,
	\end{aligned}
\end{equation}
with the weight coefficients $\theta_k$ and the generalized transition matrix $T$.
$T$ can be $T_{rw}=AD^{-1}$, $T_{sym}=D^{-\frac{1}{2}}AD^{-\frac{1}{2}}$ or others as long as they are convergent.
GDC can be viewed as a generalization of the original definition of spectral graph convolution, which also applies polynomial filters but not necessarily the Laplacian.

\section{Revisiting Existing GNN Architectures}

\begin{table*}[t]
	\small
	\centering
	\caption{A summary of $p_{\gamma}$ in Eq.~\ref{equ:gnn_generalization} in general graph convolutions.}
	\label{tab:gnn_summary}
	\vspace{5pt}
	\resizebox{1.0\textwidth}{!}{%
		\begin{tabular}{c|cccccccccc}
			\toprule
			&GCN  &SGC  &APPNP  &GCNII  &GDC  &SSGC  &GPR  &ChebyNet  &CayleNet  &BernNet   	         \\
			\midrule
			Poly-basis			&General &General &Residual &Residual &General &General &General &Chebyshev & Cayle &Bernstein    \\
			\midrule
			Poly-coefficient	&Fixed &Fixed &Fixed &Fixed &Fixed &Fixed &Learnable &Fixed &Learnable &Learnable     \\
			\bottomrule
		\end{tabular}
	}
	\vspace{-10pt}
\end{table*}
We first generalize existing spectral graph convolution as follows
\begin{equation}
	\label{equ:gnn_generalization}
	\begin{aligned}
		H=\sigma\bigl(p_{\gamma}(S)f_{\Theta}(H)\bigr),
	\end{aligned}
\end{equation}
where $S$ is the graph matrix, e.g. adjacency or Laplacian matrix and their normalized forms.
$p_{\gamma}:\mathbb R^{n\times n}\rightarrow\mathbb R^{n\times n}$ is the polynomial of graph matrices with coefficients $\gamma\in\mathbb R^k$ for a $k$-order polynomial.
$f_{\Theta}:\mathbb R^d\rightarrow\mathbb R^{d^{\prime}}$ is the feature transformation neural network with the learnable parameters $\Theta$.
In SGC~\cite{pmlr-v97-wu19e}, GDC~\cite{klicpera2019diffusion}, SSGC~\cite{zhu2020simple}, and GPR~\cite{chien2021adaptive}, $p_{\gamma}$ is implemented as the general polynomial, i.e. $p_{\gamma}(S)=\sum_{i=0}^{k}\gamma_i S^i$.
Their differences are identified by the coefficients $\gamma$.
For example, SGC corresponds to a very simple form with $\gamma_i=0, i<k$ and $\gamma_k=1$.
By removing the nonlinear layer in GCNII~\cite{chenWHDL2020gcnii}, APPNP~\cite{klicpera_predict_2019} and GCNII share the similar graph convolution layer as
\begin{equation}
	\nonumber
	\begin{aligned}
		H^{(l)}=(1-\alpha)SH^{(l-1)}+\alpha Z, H^{(0)}=Z, Z=f_{\Theta}(X),
	\end{aligned}
\end{equation}
where $\alpha\in(0, 1)$ and $X\in\mathbb R^{n\times d}$ is the input node features.
By deriving its closed-form, we reformulate it with Eq.~\ref{equ:gnn_generalization} as $p_{\gamma}(S)=\sum_{i=0}^{k-1}\alpha(1-\alpha)^i S^i+(1-\alpha)^k S^k$.
In ChebyNet~\cite{defferrard2016convolutional}, CayleNet~\cite{8521593} and BernNet~\cite{he2021bernnet}, $p_{\gamma}$ corresponds to Chebyshev, Cayle and Bernstein polynomials respectively.
GPR, CayleNet and BernNet apply learnable coefficient $\gamma$, where $\gamma$ is learned as the coefficients of general, Cayle and Bernstein basis respectively.
Therefore, with our formulation in Eq.~\ref{equ:gnn_generalization}, general graph convolutions are mainly different from $p_{\gamma}$ as summarized in Tab.~\ref{tab:gnn_summary}\footnote{Here, we follow the naming convention in GCNII called initial residual connection. GCN and GCNII interlace nonlinear computations over layers, making them difficult to reformulate all layers with Eq.~\ref{equ:gnn_generalization}. But one can represent them with the recursive form as $H^{(l)}=\sigma\bigl(p_{\gamma}(S)f_{\Theta}(H^{(l-1)})\bigr)$. For example, in GCN, we have $p_{\gamma}(S)=S$ and $f_{\Theta}(H^{(l-1)})=H^{(l-1)}\Theta$ with $S=\tilde{D}^{-\frac{1}{2}}\tilde{A}\tilde{D}^{-\frac{1}{2}}$.}.

\subsection{Correlation Analysis in the Lens of Graph's Spectrum}
\label{sec:correlation_analysis}

Based on the generalized formulation of Eq.~\ref{equ:gnn_generalization}, we conduct correlation analysis on existing graph convolution in the perspective of the graph's spectrum.
We denote $\mathcal S=p_{\gamma}(S)$  for simplicity.
$\bm h\in\mathbb R^n$ denotes one channel in $f_{\Theta}(H)$.
Then the convolution on $\bm h$ is represented as $\mathcal S\bm h$.
The cosine similarity between $\bm h$ and the $i$-th eigenvector $\bm p_i$ of $\mathcal S$ is
\begin{equation}
	\label{equ:cos_signal}
	\begin{aligned}
		\cos\bigl(\langle \bm h, \bm p_i\rangle\bigr)
		=\frac{\bm h^{\top}\bm p_i}
		{\sqrt{\sum^n_{j=1}\bigl(\bm h^{\top}\bm p_j\bigr)^2}}
		=\frac{\alpha_i}{\sqrt{\sum^n_{j=1}\alpha_j^2}}.
	\end{aligned}
\end{equation}
$\alpha_i=\bm h^{\top}\bm p_i$ is the weight of $\bm h$ on $\bm p_i$ when representing $\bm h$ with the set of orthonormal bases $\bm p_i,i\in[n]$.
The cosine similarity between $\mathcal S\bm h$ and $\bm p_i$ is
\begin{equation}
	\label{equ:cos_signal_eigenvalue}
	\begin{aligned}
		\cos\bigl(\langle \mathcal S\bm h, \bm p_i\rangle\bigr)
		=\frac{\alpha_i\lambda_i}{\sqrt{\sum^n_{j=1}\alpha_j^2\lambda_j^2}}.
	\end{aligned}
\end{equation}
The detailed derivations of Eq.~\ref{equ:cos_signal} and Eq.~\ref{equ:cos_signal_eigenvalue} are given in Appendix~\ref{deriv:equ:cos_signal_eigenvalue}.

Eq.~\ref{equ:cos_signal_eigenvalue} builds the connection between the cosine similarity and the spectrum of the underlying graph matrix.
We say the spectrum is $smooth$ if all eigenvalues have similar magnitudes.
By comparing Eq.~\ref{equ:cos_signal} and Eq.~\ref{equ:cos_signal_eigenvalue}, it shows that the graph convolution operation with the unsmooth spectrum, i.e., dissimilar eigenvalues, results in signals correlated (a higher cosine similarity) to the eigenvectors corresponding to larger magnitude eigenvalues and orthogonal (a lower cosine similarity) to the eigenvectors corresponding to smaller magnitude eigenvalues.
In the case where 0 eigenvalue is involved in the spectrum, signals would lose information in the direction of the corresponding eigenvectors.
In the deep architecture, this problem would further be exacerbated:
\begin{proposition}
	\label{prop:cos_convergence}
	Assume $ \mathcal S\in \mathbb R^{n\times n}$ is a symmetric matrix with real-valued entries. $|\lambda_1|\geq|\lambda_2|\geq,\dots,\geq|\lambda_n|$ are $n$ real eigenvalues, and $\bm p_i\in\mathbb R^n,i\in[n]$ are corresponding eigenvectors. Then, for any given $\bm h,\bm h^{\prime}\in\mathbb R^n$, we have\\
	(i) $|\cos(\langle \mathcal S^{k+1}\bm h, \bm p_1\rangle)|\geq|\cos(\langle \mathcal S^k\bm h, \bm p_1\rangle)|$ and $|\cos(\langle \mathcal S^{k+1}\bm h, \bm p_n\rangle)|\leq|\cos(\langle \mathcal S^k\bm h, \bm p_n\rangle)|$ for $k=0,1,2,\dots,+\infty$;\\
	(ii) If $|\lambda_1|>|\lambda_2|$, $\lim\limits_{k\rightarrow\infty}|\cos(\langle \mathcal S^k\bm h, \bm p_1\rangle)|=\lim\limits_{k\rightarrow\infty}|\cos(\langle \mathcal S^k\bm h, \mathcal S^k\bm h^{\prime}\rangle)|=1$, and the convergence speed is decided by $|\frac{\lambda_2}{\lambda_1}|$.
\end{proposition}
We prove Proposition~\ref{prop:cos_convergence} in Appendix~\ref{proof:prop:cos_convergence}.
Proposition~\ref{prop:cos_convergence} shows that a deeper architecture violates the spectrum's smoothness, which therefore makes the input signals more correlated to each other.~\footnote{
	Here, nonlinearity is not involved in the propagation step.
	This meets the case of the decoupling structure where a multi-layer GNN is split into independent propagation and prediction steps~\cite{liu2020towards,pmlr-v97-wu19e,klicpera_predict_2019,zhu2020simple,zhang2021litegem}.
	The propagation involving nonlinearity remains unexplored due to its high complexity, except for one case of ReLU as nonlinearity~\cite{oono2020graph}.
	Most convergence analyses (such as over-smoothing) only study the simplified linear case~\cite{cai2020graphnorm,liu2020towards,pmlr-v97-wu19e,klicpera_predict_2019,zhao2020pairnorm,xu2018representation,chenWHDL2020gcnii,zhu2020simple,klicpera2019diffusion,chien2021adaptive}.
}
Finally, $\textrm{Rank}((\bm h_1, \bm h_2, \dots, \bm h_d))=1$, and the information within signals would be washed out.
Note that all the above analysis does not impose any constraint to the underlying graph such as connectivity.

\textbf{Revisiting oversmoothing via the lens of correlation issue.}
In the well-known oversmoothing analysis, the convergence is considered as $\lim _{k\rightarrow\infty} \tilde A_{\mathrm{sym}}^k H^{(0)}=H^{(\infty)}$ where each row of $H^{(\infty)}$ only depends on the
degree of the corresponding node, provided that the graph is $irreducible$ and $aperiodic$~\cite{xu2018representation,liu2020towards,zhao2020pairnorm,chien2021adaptive}.
Our analysis generalizes this result.
In our analysis, the convergence of the cosine similarity among signals does not limit a graph to be $connected$ or $normalized$ that is required in the oversmoothing analysis analogical to the stationary distribution of the Markov chain, and even does not require a model to be necessarily $deep$ :
it is essentially caused by the bad distributions of eigenvalues, while the deep architecture exacerbates it.
Interestingly, inspired by this perspective, the correlation problem actually relates to the specific topologies since different topologies correspond to different spectrum.
There exists topologies inherently with bad distributions of eigenvalues, and they will suffer from the problem even with a shallow architecture.
Also, by taking the symmetry into consideration, Proposition~\ref{prop:cos_convergence}(i) shows that the convergence of cosine similarity with respect to $k$ is also $monotonous$.
In contrast that existing results only discuss the theoretical infinite depth case, this provides more concrete evidence in the practical finite depth case that a deeper architecture can be more harmful than a shallow one.

\textbf{Revisiting graph filters via the lens of correlation issue.}
The graph filter is approximated by a polynomial in the theory of spectral graph convolution~\cite{hammond2011wavelets,defferrard2016convolutional}. Although theoretically, one can approximate any desired graph filter by increasing the order $k$ of the polynomial~\cite{shuman2013emerging}, most GNNs cannot gain improvements by enlarging $k$. Instead, the simple low-pass filter studied by many improvements on spectral graph convolution acts as the practical effective choice~\cite{shuman2013emerging,pmlr-v97-wu19e,1905.09550,muhammet2020spectral,klicpera2019diffusion}. Although there are studies involving high-pass filters to better process high-frequency signals recently, the low-pass is always required in graph convolution~\cite{zhu2020simple,zhu2021interpreting,balcilar2021analyzing,fagcn2021,gao2021message}. This can be explained in the perspective of correlation analysis. As we have shown, the graph convolution is sensitive to the spectrum. A more proper filter to better respond to relevant signal patterns may result in an unsmooth spectrum, making different channels correlated to each other after convolution. In contrast, although a low-pass filter has limited expressiveness, it corresponds to a smoother spectrum, which alleviates the correlation issue.

\section{Correlation-free Architecture}
\label{sec:channel_wise_architecture}

The correlation analysis via the lens of graph's spectrum shows that in general GNN architectures, the $unsmooth$ spectrum leads to correlation issue and therefore acts as the obstacle to developing deep architectures as well as leveraging more expressive graph filters.
To overcome this issue, a natural idea is to assign the graph convolution in different channels of $f_{\Theta}(H)$ with different spectrums, which can be viewed as a generalization of Eq.~\ref{equ:gnn_generalization} as follows
\begin{equation}
	\label{equ:channel_wise_graph_conv}
	\begin{aligned}
		H=f_{\Psi}\big(\bigl[p_{\Gamma_1}(S)f_{\Theta_1}(H), \dots , p_{\Gamma_{d^{\prime}}}(S)f_{\Theta_{d^{\prime}}}(H)\bigr]\bigr).
	\end{aligned}
\end{equation}
Both $f_{\Theta}:\mathbb R^d\rightarrow\mathbb R^{d^{\prime}}$ and $f_{\Psi}:\mathbb R^{d^{\prime}}\rightarrow\mathbb R^{d^{\prime\prime}}$ are the feature transformation neural networks with the learnable parameters $\Theta$ and $\Psi$ respectively.
$p_{\Gamma_i}$ is the $i$-th polynomial with the learnable coefficients $\Gamma_i\in\mathbb R^k$.
$f_{\Theta_i}(H)\in\mathbb R^n$ is the $i$-th channel of $f_{\Theta}(H)\in\mathbb R^{n\times d^{\prime}}$.
We denote $\bm h_i=f_{\Theta_i}(H)$ for simplicity.
Then the convolution operation on $\bm h_i$ in Eq.~\ref{equ:channel_wise_graph_conv} is
\begin{equation}
	\label{equ:channel_wise_filter}
	\begin{aligned}
		p_{\Gamma_i}(S)\bm h_i = \sum_{j=0}^{k}\Gamma_{i, j} S^j \bm h_i = U\sum_{j=0}^{k}\Gamma_{i, j}\Lambda^j U^{\top}\bm h_i
	\end{aligned}
\end{equation}
with the filter $\mathrm{diag}(g_{\Gamma_i}) = \sum_{j=0}^{k}\Gamma_{i, j}\Lambda^j$.
We denote $\bm\lambda=\bigl(\lambda_1, \lambda_2, \dots, \lambda_n\bigr)^{\top}\in\mathbb R^n$.
Then,
\begin{equation}
	\label{equ:vandermonde}
	\begin{aligned}
		g_{\Gamma_i}=\sum_{j=0}^k\Gamma_{i, j}\bm\lambda^j=
		\begin{pmatrix}
			\bm\lambda^1,\bm\lambda^2,\dots,\bm\lambda^k
		\end{pmatrix}
		\times\Gamma_i=V\times\Gamma_i,
	\end{aligned}
\end{equation}
where $V=\bigl(\bm\lambda^1,\bm\lambda^2,\dots,\bm\lambda^k\bigr)\in\mathbb R^{n\times k}$.
If $\lambda_i\neq\lambda_j$ for any $i\neq j$, i.e., the algebraic multiplicity of all eigenvalues is 1, $V$ is a Vandermonde matrix with $\textrm{Rank}(V)=\textrm{min}(n, k)$.
$V_j=\bm\lambda^j, j\in[k]$ serve as a set of $k$ bases, where each filter $g_{\Gamma_i}$ is a linear combination of $V_j$.
Hence, a larger $k$ helps to better approximate the desired filter.
When $k=n$, $V$ is a full-rank matrix and $g_{\Gamma_i}$ is sufficient to represent any desired filter with proper assignments of $\Gamma_i$.
Note that $n$ is much smaller in real-world graph-level tasks than that in node-level tasks, making $k=n$ more tractable.

By considering the columns of a Vandermonde matrix, i.e. $\bm\lambda^j, j\in[k]$ as bases, we can see that when increasing $k$ (aka applying more bases), $\lambda_i^k$ with $|\lambda_i|<<1$ goes diminishing and $\lambda_i^k$ with $|\lambda_i|>>1$ goes divergent.
To balance the diminishing and divergence problems when applying a larger $k$, we need to carefully control the range of the spectrum close to $1$ or $-1$.
General approaches have $\bm\lambda\in[0, 1]^n$~\footnote{General approaches use the (symmetry) normalized $A$, i.e. $\tilde D^{-\frac{1}{2}}\tilde A\tilde D^{-\frac{1}{2}}$, $\tilde D^{-1}\tilde A$ to guarantee its spectrum is bounded by $[-1, 1]$~\cite{kipf2017semi,klicpera2019diffusion} or the (symmetry) normalized $L$, i.e. $I-\tilde D^{-\frac{1}{2}}\tilde A\tilde D^{-\frac{1}{2}}$ to ensure the boundary [0, 2] and then rescale it to $[0, 1]$~\cite{he2021bernnet}.}.
Although there is no concern of divergence problems, $\lambda_i^k$, especially for a small $\lambda_i$, inclines to 0 when increasing $k$,
making the higher-order basis ineffective in the practical limited precision condition.

On the other hand, general approaches are less likely to learn the coefficients of polynomial filters in a completely free manner~\cite{klicpera2019diffusion,he2021bernnet}.
The specially designed coefficients to explicit modify spectrum, i.e. Personalized PageRank (PPR), heat kernel~\cite{klicpera2019diffusion}, etc or the coefficients learned under the constrained condition, i.e. Chebyshev~\cite{defferrard2016convolutional}, Cayley~\cite{8521593}, Bernstein~\cite{he2021bernnet} polynomial, etc act as the practical applicable filters.
This is probably because the polynomial filter relies on sophisticated coefficients to maintain spectral properties.
Learning them from scratch would easily fall into an ill-posed filter~\cite{he2021bernnet}.
However, by modifying the filter bases, it would relax the requirement on the coefficients, making it more suitable for learning coefficients from scratch.

Finally, although the new architecture in Eq.~\ref{equ:channel_wise_graph_conv} decouples the correlation issue from developing more powerful filters, general filter bases are less qualified for approximating more complex filters. Hence, we still need to explore more effective filter bases to replace existing ones. To this end, we will introduce two different improvements on filter bases in the following sections whose effectiveness will serve as a verification of our analysis.

\subsection{Spectral Optimization on Filter Basis}
\label{sec:basis_optim}

One can directly apply a smoothing function on the spectrum of $S$, which helps to narrow the range of eigenvalues close to 1 or -1.
There can be various approaches to this end, and in this paper, we propose the following eigendecomposition-based method for a symmetric matrix $S=P\Lambda P^{\top}$~\footnote{Although the computation of $S_{\rho}$ requires eigendecomposition, $S$ is always a symmetric matrix and the eigendecomposition on it is much faster than a general matrix.}
\begin{equation}
	\label{equ:basis_optim}
	\begin{aligned}
		S_{\rho}=P\mathrm{diag}(f_{\rho}(\lambda_i))P^{\top},
		f_{\rho}(\lambda)=
		\left\{
		\begin{array}{ll}
			-(-\lambda)^{\rho}, \lambda<0 \\
			\lambda^{\rho}, \lambda\geq0,
		\end{array}
		\right.
	\end{aligned}
\end{equation}
where $i\in[n]$.
$\rho\in(0, 1)$, $\lambda^{\rho}=e^{\rho\ln\lambda}$.
$S_{\rho}$ serves as the polynomial bases in Eq.~\ref{equ:channel_wise_filter}.
Unlike general spectral approaches, $S$ is not required to be a bounded spectrum.
It can leverage more bases while alleviating both the diminishing and divergence problems by controlling $\rho\cdot k$ in a small range.
Therefore, $S_{\rho}$ can be considered as a basis-augmentation technique as shown in Fig.~\ref{fig:basis_optim}.
\begin{figure}[ht]
	\centering
	\includegraphics[width=0.95\columnwidth]{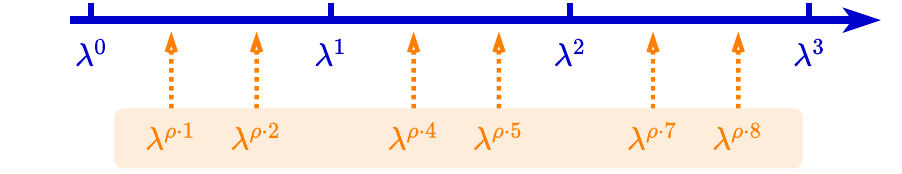}
	\vspace{-5pt}
	\caption{Assume $\lambda>0$ and $\rho=0.\dot 3$.}
	\label{fig:basis_optim}
	\vspace{-10pt}
\end{figure}

There can be other transformations on the spectrum, e.g., $P\bigl(\mathrm{Sigmoid}\bigl(\Lambda\bigr)+\rho\bigr)P^{\top}$, which have a similar effect to $S_{\rho}$.
Note that the injectivity of $f_{\rho}$ also influences the approximation ability, which is discussed in more details in Appendix~\ref{more_filter_basis}.

\subsection{Generalized Normalization on Filter Basis}

Eq.~\ref{equ:basis_optim} directly operates on the spectrum, which can achieve an accurate control on the range of the spectrum but requires eigendecomposition.
To avoid eigendecomposition, we alternatively study the effects of graph normalization on the spectrum.
We generalize the normalized adjacency matrix as follows
\begin{equation}
	\label{equ:basis_norm}
	\begin{aligned}
		\tilde{D}^{\epsilon}\tilde{A}\tilde{D}^{\epsilon}=(D+\eta I)^{\epsilon}(A+\eta I)(D+\eta I)^{\epsilon},
	\end{aligned}
\end{equation}
where $\epsilon\in[-0.5, 0]$ is the normalization coefficient and $\eta\in[0, 1]$ is the shift coefficient.
Widely-used $\tilde D^{-\frac{1}{2}}\tilde A\tilde D^{-\frac{1}{2}}$ corresponds to $\epsilon=-0.5$ and $\eta=1$.

\begin{proposition}
	\label{prop:basis_norm}
	Let $\lambda_1\geq\lambda_2\geq\dots\geq\lambda_n$ be the spectrum of $A$ and $\mu_1\geq\mu_2\geq\dots\geq\mu_n$ be the spectrum of $(D+\eta I)^{\epsilon}(A+\eta I)(D+\eta I)^{\epsilon}$, then for any $i\in[n]$, we have
	\begin{equation}
		\nonumber
		\begin{aligned}
			(\lambda_i+\eta)(d_{\mathrm{max}}+\eta)^{2\epsilon}\leq\mu_i\leq(\lambda_i+\eta)(d_{\mathrm{min}}+\eta)^{2\epsilon}
		\end{aligned},
	\end{equation}
	where $d_{\mathrm{min}}$ and $d_{\mathrm{max}}$ are the minimum and maximum degrees of nodes in the graph.
\end{proposition}
We prove Proposition~\ref{prop:basis_norm} in Appendix~\ref{proof:prop:basis_norm}.
Proposition~\ref{prop:basis_norm} extends the results in \cite{4389477}, showing that the normalization has a scaling effect on the spectrum: a smaller $\epsilon$ is likely to lead to a smaller $\mu_i$,
while a larger $\epsilon$ is likely to lead to a larger $\mu_i$.
When $\epsilon=0$, the upper and lower bounds coincide with $\mu_i=\lambda_i+\eta$.

\begin{figure*}[th]
	\centering
	\includegraphics[width=\textwidth]{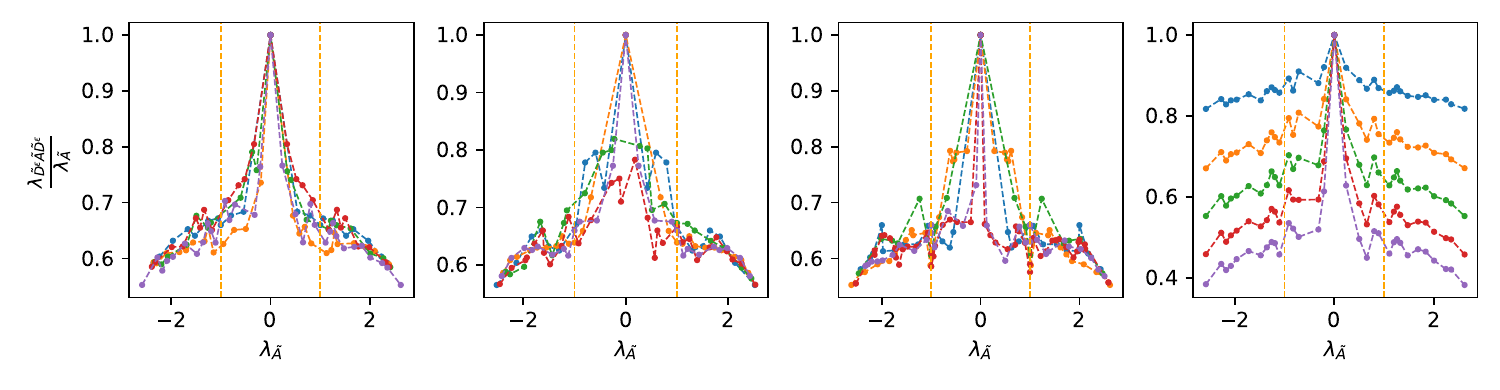}
	\vspace{-20pt}
	\caption{We use the metric $\frac{\lambda_{\tilde D^{\epsilon}\tilde A\tilde D^{\epsilon}}}{\lambda_{\tilde A}}$ to evaluate the shrinking effects of $\tilde D^{\epsilon}\tilde A\tilde D^{\epsilon}$ on the spectrum. We randomly sample 5 graphs in each of three datasets ZINC, MolPCBA and NCI1 respectively. In the first three figures, we use the fixed $\epsilon=-0.3$ on all 5 graphs. In the fourth figure, we use $\epsilon=-0.1, -0.2, -0.3, -0.4, -0.5$ respectively on one graph, which corresponds to the 5 lines from top to bottom.
	More visualization results on other datasets can be found in Appendix~\ref{spectrum_visualizations}.}
	\label{fig:basis_norm}
	\vspace{-10pt}
\end{figure*}
To further investigate the effects of the normalization on the  spectrum, we fix $\eta=0$ and empirically evaluate $\epsilon$ as shown in Fig.~\ref{fig:basis_norm}.
When fixing $\epsilon$, $\tilde D^{\epsilon}\tilde A\tilde D^{\epsilon}$ shrinks the spectrum of $A$ with different degrees on different eigenvalues.
For eigenvalues with small magnitudes (in the middle area of the spectrum), it has a small shrinking effect, while for eigenvalues with large magnitudes, it has a relatively large shrinking effect.
Hence, $\tilde D^{\epsilon}\tilde A\tilde D^{\epsilon}$ can be used as a spectral smoothing method.
Also, different $\epsilon$ results in different shrinking effects, which is consistent with the results in Proposition~\ref{prop:basis_norm}.
Widely-used $\tilde D^{-\frac{1}{2}}\tilde A\tilde D^{-\frac{1}{2}}$ with the spectrum bounded by $[-1, 1]$ may not be a good choice since the diminishing problem.
Intuitively, to utilize more bases, we should narrow the range of the spectrum close to 1 (or -1) to avoid both the diminishing and divergence problems in higher-order bases.
This can vary from different datasets and we should carefully balance $\epsilon$ and $k$.

\section{Related Work}

Many improvements on GNNs can be unified into the spectral smoothing operations, e.g. low-pass filter~\cite{pmlr-v97-wu19e,zhu2021interpreting,klicpera_predict_2019,klicpera2019diffusion,chien2021adaptive,balcilar2021analyzing}, alleviating oversmoothing~\cite{chenWHDL2020gcnii,xu2018representation,liu2020towards,li2018deeper}, graph normalization\cite{cai2020graphnorm}, etc, our analysis on the relations of the correlation issue and the spectrum of underlying graph's matrix provides a unified interpretation on their effectiveness.

ChebyNet~\cite{defferrard2016convolutional}, CayleNet~\cite{8521593}, APPNP~\cite{klicpera_predict_2019}, SSGC~\cite{zhu2020simple}, GPR~\cite{chien2021adaptive}, BernNet~\cite{he2021bernnet}, etc explore various polynomial filters and use the normalized adjacency or Laplacian matrix as basis.
We improve the approximation ability of polynomial filters by altering the spectrum of filter bases.
The resulting bases allow leveraging more bases to approximate more sophisticated filters and are more suitable for learning coefficients from scratch.

We note that the concurrent work \cite{jin2022towards} has also pointed out the overcorrelation issue in the infinite depth case, without further discussion on the reason (e.g. graph's spectrum) behind this phenomenon.
In contrast, we show that correlation is inherently caused by the $unsmooth$ spectrum of the underlying graph filter, and also quantify this effect with spectral smoothness.
It allows to analyze the correlation across all layers instead of only the theoretical infinite depth.

\section{Experiments}

We conduct experiments on TUDatasets~\cite{yanardag2015deep,KKMMN2016}, OGB~\cite{hu2020open} which involve graph classification tasks and ZINC~\cite{dwivedi2020benchmarking} which involves graph regression tasks.
Then, we evaluate the effects of our proposed new graph convolution architecture and two filter bases.

\subsection{Results}

\begin{table}[h]
	\centering
	\caption{Results on TUDatasets. Higher is better.}
	\label{tab:tu_results}
	\vspace{5pt}
	\resizebox{1.0\columnwidth}{!}{%
		\begin{tabular}{ccccc}
			\toprule
			dataset															  & NCI1                            & NCI109                 & ENZYMES                 & PTC\_MR                 	         \\
			\midrule
			GK			 & 62.49$\pm$0.27       & 62.35$\pm$0.3   & 32.70$\pm$1.20      & 55.65$\pm$0.5                  \\
			RW				&-                               &-                         & 24.16$\pm$1.64     & 55.91$\pm$0.3                             			  \\
			PK           & 82.54$\pm$0.5          &-                         &-                           & 59.5$\pm$2.4                             			 \\
			FGSD						 & 79.80                           & 78.84                    &-                            & 62.8                          			 \\
			AWE                  &-                               &-                         & 35.77$\pm$5.93    &-						                  \\
			DGCNN                      & 74.44$\pm$0.47       &-                  	    & 51.0$\pm$7.29       & 58.59$\pm$2.5                  \\
			PSCN                 & 74.44$\pm$0.5         &-                   		 &-                            & 62.29$\pm$5.7                  \\
			\midrule
			DCNN               & 56.61$\pm$1.04        &-                 		   &-                           &-						                   \\
			ECC            & 76.82                          & 75.03                    & 45.67                      &-                             			  \\
			DGK                     & 80.31$\pm$0.46        & 80.32$\pm$0.3   & 53.43$\pm$0.91   & 60.08$\pm$2.6                  \\
			GraphSag                & 76.0$\pm$1.8        &-           & 58.2$\pm$6.0   &-                  \\
			CapsGNN                & 78.35$\pm$1.55        &-           & 54.67$\pm$5.67   &-                  \\
			DiffPool         &76.9$\pm$1.9                               &-                         & 62.53                      &-                              			\\
			GIN                      			    & 82.7$\pm$1.7             &-                    		&-                           & 64.6$\pm$7.0  \\
			$k$-GNN          & 76.2                            &-                         &-                           & 60.9                              			\\
			\midrule
			Spec-GN         &84.79$\pm$1.63 &\textbf{83.62}$\pm$\textbf{0.75} &72.50$\pm$5.79  &\textbf{68.05}$\pm$\textbf{6.41}\\
			Norm-GN          &\textbf{84.87}$\pm$\textbf{1.68} &83.50$\pm$1.27 &\textbf{73.33}$\pm$\textbf{7.96} &67.76$\pm$4.52\\
			\bottomrule
		\end{tabular}
	}
	\vspace{-10pt}
\end{table}

\begin{figure*}[th]
	\centering
	\includegraphics[width=0.95\textwidth]{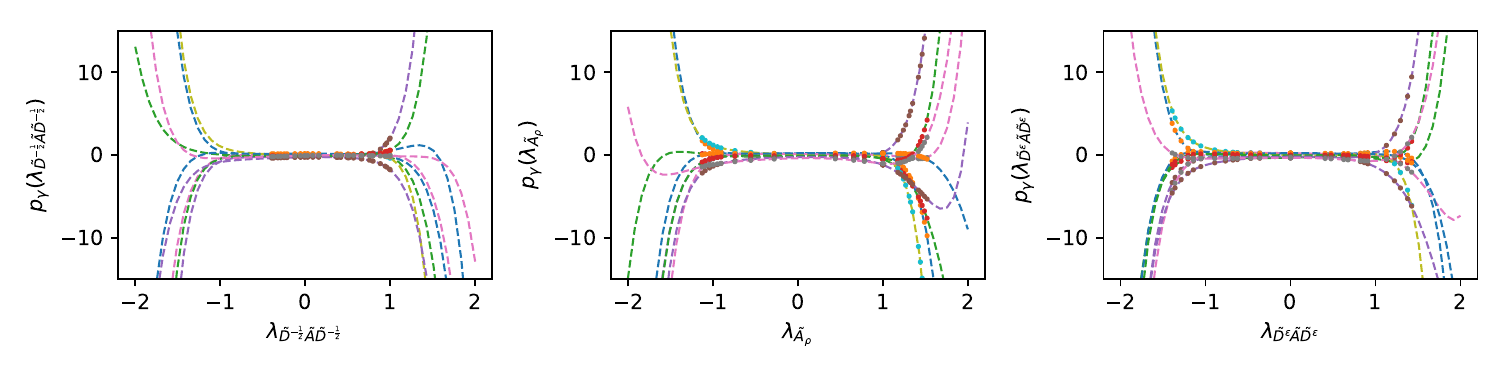}
	\vspace{-10pt}
	\caption{A visualization of the learned filters on ZINC. We tested on three bases with each basis randomly sampling 9 filters. Dots represent the eigenvalues of each basis. More visualization results on other datasets can be found in Appendix~\ref{filter_visualizations}.}
	\label{fig:filter_visual}
	\vspace{-10pt}
\end{figure*}

\textbf{Settings.}
We use the default dataset splits for OGB and ZINC.
For TUDatasets, we follow the standard 10-fold cross-validation protocol and splits from \cite{zhang2018end} and report our results following the protocol described in \cite{xu2018how,ying2018hierarchical}.
Following all baselines on the leaderboard of ZINC, we control the number of parameters around 500K.
The baseline models include: 
GK~\cite{shervashidze2009efficient},
RW~\cite{vishwanathan2010graph},
PK~\cite{neumann2016propagation},
FGSD~\cite{verma2017hunt},
AWE~\cite{pmlr-v80-ivanov18a},
DGCNN~\cite{zhang2018end},
PSCN~\cite{niepert2016learning},
DCNN~\cite{atwood2016diffusion},
ECC~\cite{simonovsky2017dynamic},
DGK~\cite{yanardag2015deep},
CapsGNN~\cite{xinyi2018capsule},
DiffPool~\cite{ying2018hierarchical},
GIN~\cite{xu2018how},
$k$-GNN~\cite{morris2019weisfeiler},
GraphSage~\cite{hamilton2017inductive},
GAT~\cite{velickovic2018graph},
GatedGCN-PE~\cite{bresson2017residual},
MPNN (sum)~\cite{gilmer2017neural},
DeeperG~\cite{li2020deepergcn},
PNA~\cite{corso2020pna},
DGN~\cite{beani2021directional},
GSN~\cite{bouritsas2020improving},
GINE-{\scriptsize VN}~\cite{brossard2020graph},
GINE-{\scriptsize APPNP}~\cite{brossard2020graph},
PHC-GNN~\cite{le2021parameterized},
SAN~\cite{Kreuzer2021rethinking},
Graphormer~\cite{ying2021transformers}.
Spec-GN denotes the proposed graph convolution in Eq.~\ref{equ:channel_wise_graph_conv} with the smoothed filter basis by spectral transformation in Eq.~\ref{equ:basis_optim}.
Norm-GN denotes the proposed graph convolution in Eq.~\ref{equ:channel_wise_graph_conv} with the smoothed filter basis by graph normalization in Eq.~\ref{equ:basis_norm}.

\begin{table}[h]
	\centering
	\caption{Results on ZINC (Lower is better) and MolPCBA (Higher is better).}
	\label{tab:zinc_pcba_results}
	\vspace{5pt}
	\resizebox{1.0\columnwidth}{!}{%
		\begin{tabular}{c|cc}
			\toprule
			method                & ZINC $_{\mathrm{MAE}}$  & MolPCBA $_{\mathrm{AP}}$\\
			\midrule
			GCN &0.367$\pm$0.011 $_{(505\mathrm k)}$ &24.24$\pm$0.34 $_{(2.02\mathrm m)}$ \\
			GIN &0.526$\pm$0.051 $_{(510\mathrm k)}$ &27.03$\pm$0.23 $_{(3.37\mathrm m)}$ \\
			GAT &0.384$\pm$0.007 $_{(531\mathrm k)}$ &- \\
			GraphSage &0.398$\pm$0.002 $_{(505\mathrm k)}$ &- \\
			GatedGCN-PE &0.214$\pm$0.006 $_{(505\mathrm k)}$ &- \\
			MPNN &0.145$\pm$0.007 $_{(481\mathrm k)}$ &- \\
			DeeperG &- &28.42$\pm$0.43 $_{(5.55\mathrm m)}$ \\
			PNA &0.142$\pm$0.010 $_{(387\mathrm k)}$ &28.38$\pm$0.35 $_{(6.55\mathrm m)}$ \\
			DGN &0.168$\pm$0.003 $_{\mathrm{NA}}$ &28.85$\pm$0.30 $_{(6.73\mathrm m)}$ \\
			GSN &0.101$\pm$0.010 $_{(523\mathrm k)}$ &- \\
			GINE-{\scriptsize VN} &- &29.17$\pm$0.15 $_{(6.15\mathrm m)}$ \\
			GINE-{\scriptsize APPNP} &- &\textbf{29.79}$\pm$\textbf{0.30} $_{(6.15\mathrm m)}$ \\
			PHC-GNN &- &29.47$\pm$0.26 $_{(1.69\mathrm m)}$ \\
			SAN &0.139$\pm$0.006 $_{(509\mathrm k)}$ &- \\
			Graphormer &0.122$\pm$0.006 $_{(489\mathrm k)}$ &- \\		
			\midrule
			Spec-GN &\textbf{0.0698}$\pm$\textbf{0.002} $_{(503\mathrm k)}$ &29.65$\pm$0.28 $_{(1.74\mathrm m)}$ \\
			Norm-GN  &0.0709$\pm$0.002 $_{(500\mathrm k)}$ &29.51$\pm$0.33 $_{(1.74\mathrm m)}$ \\
			\bottomrule
		\end{tabular}
	}
	\vspace{-10pt}
\end{table}

\textbf{Results.}
Tab.~\ref{tab:tu_results} and~\ref{tab:zinc_pcba_results} summarize performance of our approaches comparing with baselines on TUDatasets, ZINC and MolPCBA.
For TUDatasets, we report the results of each model in its original paper by default. When the results are not given in the original paper, we report the best testing results given in \cite{zhang2018end,pmlr-v80-ivanov18a,xinyi2018capsule}.
For ZINC and MolPCBA, we report the results of their public leaderboards.
TUDatasets involves small-scale datasets.
NCI1 and NCI109 are around 4K graphs.
ENZYMES and PTC\_MR are under 1K graphs.
General GNNs easily suffer from overfitting on these small-scale data, and therefore we can see that some traditional kernel-based methods even get better performance.
However, Spec-GN and Norm-GN achieve higher classification accuracies by a large margin on these datasets.
The results on TUDatasets show that although Spec-GN and Norm-GN achieve more expressive filters, it does not lead to overfitting on learning graph representations.
Recently, Transformer-based models are quite popular in learning graph representations, and they significantly improve the results on large-scale molecular datasets.
On ZINC, Spec-GN and Norm-GN outperform these Transformer-based models by a large margin.
And on MolPCBA, they are also competitive compared with SOTA results.

\begin{table*}[th]
	\small
	\centering
	\caption{Ablation study results on ZINC with different settings. }
	\label{tab:ablation}
	\vspace{5pt}
	\begin{tabular}{cccccccc}
		\toprule
		\multicolumn{2}{c}{Architecture} & \multicolumn{4}{c}{Basis}  & \multirow{1}{*}{test MAE} & \multirow{1}{*}{valid MAE}\\ \cline{1-2} \cline{3-6}
		shd&idp&$\tilde{D}^{-\frac{1}{2}}\tilde{A}\tilde{D}^{-\frac{1}{2}}$&$\tilde D^{-1}\tilde A$&$\tilde A_{\rho}$&$\tilde{D}^{\epsilon}\tilde{A}\tilde{D}^{\epsilon}$& &\\ \hline
		\yes& &\yes& & & & 0.1415$\pm$0.00748 & 0.1568$\pm$0.00729 \\ \hline
		\yes& & &\yes& & & 0.1439$\pm$0.00900 & 0.1569$\pm$0.00739 \\ \hline
		\yes& & & &\yes& & 0.1061$\pm$0.01018 & 0.1294$\pm$0.01454 \\ \hline
		\yes& & & & &\yes& 0.1133$\pm$0.01711 & 0.1316$\pm$0.02057 \\ \hline
		&\yes&\yes& & & & 0.0944$\pm$0.00379 & 0.1100$\pm$0.00787 \\ \hline
		&\yes& &\yes& & & 0.0982$\pm$0.00417 & 0.1172$\pm$0.00666 \\ \hline
		&\yes& & &\yes& & \textbf{0.0698}$\pm$\textbf{0.00200} & \textbf{0.0884}$\pm$\textbf{0.00319} \\ \hline
		&\yes& & & &\yes & 0.0709$\pm$0.00176 & 0.0929$\pm$0.00445 \\
		\bottomrule
	\end{tabular}
	\vspace{-10pt}
\end{table*}

\subsection{Ablation Studies}
\label{sec:ablation_studies}

We perform ablation studies on the proposed architecture and the filter bases $\tilde A_{\rho}$ (by setting $S=\tilde A$ in Eq.~\ref{equ:basis_optim}) and $\tilde{D}^{\epsilon}\tilde{A}\tilde{D}^{\epsilon}$ on ZINC.
We use ``idp'' and ``shd'' to respectively represent the correlation-free architecture (also known as independent filter architecture) in Eq.~\ref{equ:channel_wise_graph_conv} and the general shared filter architecture in Eq.~\ref{equ:gnn_generalization}.
Both architectures learn the filter coefficients from scratch.

\textbf{Correlation-free architecture and different filter bases.}
In Fig.~\ref{fig:filter_visual}, we visualize the learned filters in the correlation-free on three bases, i.e. $\tilde{D}^{-\frac{1}{2}}\tilde{A}\tilde{D}^{-\frac{1}{2}}$, $\tilde A_{\rho}$ and $\tilde{D}^{\epsilon}\tilde{A}\tilde{D}^{\epsilon}$.
The visualizations show that each channel indeed learns a different filter on all three bases.
$\tilde{D}^{-\frac{1}{2}}\tilde{A}\tilde{D}^{-\frac{1}{2}}$ has the bounded spectrum $[-1, 1]$ that is slightly close to $1$ due to the involvement of self-loop.
The filters learn a similar response on all range which corresponds to different frequencies in frequency domain.
$\tilde A_{\rho}$ and $\tilde{D}^{\epsilon}\tilde{A}\tilde{D}^{\epsilon}$ have the spectrum close to $1$ or $-1$ while the filters learn diverse responses on these areas, which corresponds to more complex patterns on different frequencies.
Tab.~\ref{tab:ablation} shows that the correlation-free always outperforms the shared filter by a large margin on all tested bases.
Both $\tilde{D}^{-\frac{1}{2}}\tilde{A}\tilde{D}^{-\frac{1}{2}}$ and $\tilde D^{-1}\tilde A$ have the bounded spectrum $[-1, 1]$ and they have similar performance.
$\tilde A_{\rho}$ and $\tilde{D}^{\epsilon}\tilde{A}\tilde{D}^{\epsilon}$ narrow the range of the spectrum close to $1$ or $-1$ through completely different strategies, but they have similar performance that is much better than $\tilde{D}^{-\frac{1}{2}}\tilde{A}\tilde{D}^{-\frac{1}{2}}$ and $\tilde D^{-1}\tilde A$.
This validates our analysis on the filter basis.
Meanwhile, $\tilde A_{\rho}$ achieves more accurate control on the spectrum, and correspondingly, it slightly outperforms $\tilde{D}^{\epsilon}\tilde{A}\tilde{D}^{\epsilon}$.

\begin{figure*}[h]
	\centering
	\includegraphics[width=0.85\textwidth]{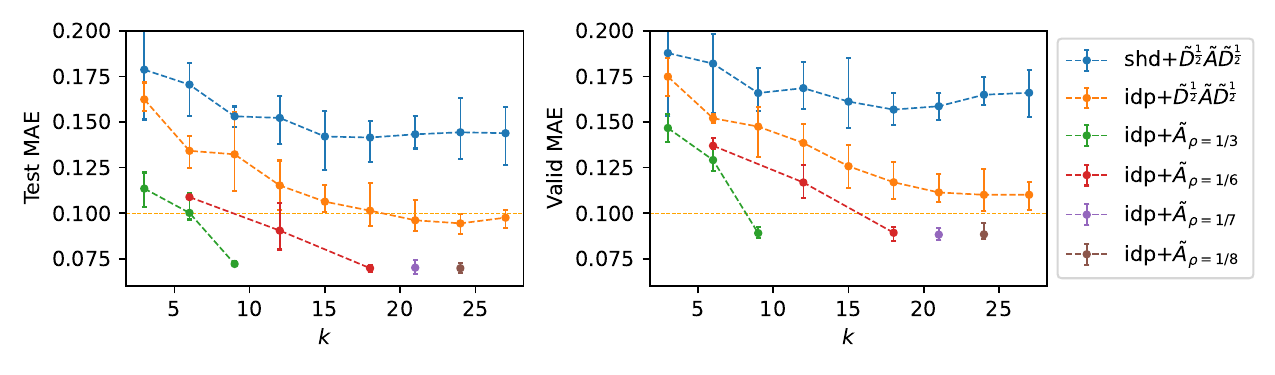}
	\vspace{-10pt}
	\caption{Ablation study results on ZINC with different number of bases $k$.}
	\label{fig:ablation_plot}
	\vspace{-10pt}
\end{figure*}

\textbf{Do more bases gain improvements?}
In Fig.~\ref{fig:ablation_plot}, we systematically evaluate the effects of the number of bases on learning graph representations, including $\tilde{D}^{-\frac{1}{2}}\tilde{A}\tilde{D}^{-\frac{1}{2}}$ and our $\tilde A_{\rho}$ with $\rho=1/3, 1/6, 1/7, 1/8$.
The shared filter case, i.e. shd$+\tilde{D}^{-\frac{1}{2}}\tilde{A}\tilde{D}^{-\frac{1}{2}}$ cannot well leverage more bases (a larger $k$) as the MAE stops decreasing at $0.150$ which is also reported by several baselines in Tab.~\ref{tab:zinc_pcba_results}.
In contrast, both correlation-free cases idp$+\tilde{D}^{-\frac{1}{2}}\tilde{A}\tilde{D}^{-\frac{1}{2}}$ and idp$+\tilde A_{\rho}$ outperform the shared filter case by a large margin and they continuously gain improvements when increasing $k$.
The MAE of idp$+\tilde{D}^{-\frac{1}{2}}\tilde{A}\tilde{D}^{-\frac{1}{2}}$ stops decreasing at the test MAE close to 0.09 and the valid MAE close to 0.11.
By replacing $\tilde{D}^{-\frac{1}{2}}\tilde{A}\tilde{D}^{-\frac{1}{2}}$ with $\tilde A_{\rho}$, the best test MAE is below 0.07, and the best valid MAE is close to 0.088.
The bases in $\tilde A_{\rho}$ are controlled by both $\rho$ and $k$.
We use the tuple $(\rho, k)$ to denote a combination of $\rho$ and $k$.
By fixing $\rho$, the curves corresponding to $\rho=1/3$ and $\rho=1/6$ show that increasing $k$ gains improvements.
By fixing the upper bound of $\rho\times k$ to be 1, $(1/6, 6)$ involves 3 more bases than $(1/3, 3)$ and outperforms $(1/3, 3)$.
The same results are also reflected in the comparison of $(1/6, 12)$ and $(1/3, 6)$.
For the comparison of $(1/6, 18)$ and $(1/3, 9)$, both settings achieve the lowest MAE and the difference is less obvious.

\begin{figure*}[h]
	\centering
	\includegraphics[width=0.8\textwidth]{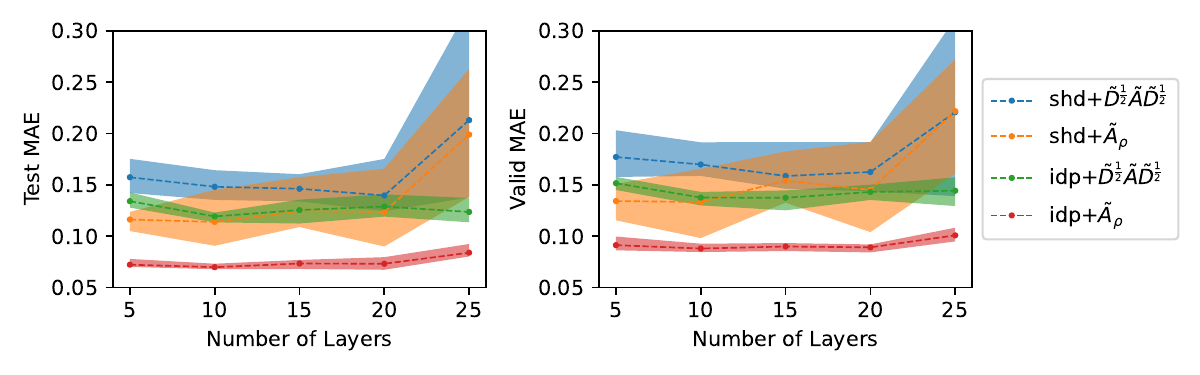}
	\vspace{-10pt}
	\caption{Ablation study results on ZINC with different number of layers.}
	\label{fig:zinc_depth}
	\vspace{-10pt}
\end{figure*}

\textbf{The effects of model depth.}
Fig.\ref{fig:zinc_depth} shows the performance comparisons between correlation-free and shared filter as depth increases.
Each architecture is tested with the default basis $\tilde D^{\frac{1}{2}}\tilde A\tilde D^{\frac{1}{2}}$ and our proposed $\tilde A_{\rho}$.
We set the same number of bases in all resulting models, and each model is tested with the number of layers (depth) equal to $\{5, 10, 15, 20, 25\}$.
The results show that the correlation-free can preserve the performance as depth increases.
The shared filter cases perform quite unstable and drop dramatically when depth $>20$. Also, across all depths, the correlation-free almost always outperforms the shared filter and has low variance among different runs.
In Appendix~\ref{ablate_deep}, we also test cosine similarities of different layers in a deep model.

\textbf{Stability.}
We also found that the correlation-free is more stable in different runs than the shared filter case as reflected in the standard deviation in Tab.~\ref{tab:ablation}.
This is probably because different channels may pose different patterns, which causes interference among each other in the shared filter case. While the correlation-free well avoids this problem.
Also, the results of $\tilde A_{\rho}$ and $\tilde{D}^{\epsilon}\tilde{A}\tilde{D}^{\epsilon}$ are more stable than $\tilde{D}^{-\frac{1}{2}}\tilde{A}\tilde{D}^{-\frac{1}{2}}$ and $\tilde D^{-1}\tilde A$ in different runs.
For $\tilde{D}^{-\frac{1}{2}}\tilde{A}\tilde{D}^{-\frac{1}{2}}$ and $\tilde D^{-1}\tilde A$, the difference between the best and the worst runs can be more than 0.02.
While for $\tilde A_{\rho}$ and $\tilde{D}^{\epsilon}\tilde{A}\tilde{D}^{\epsilon}$, this difference is less than 0.01.
More results are given in Appendix~\ref{more_results}.
The instability of $\tilde{D}^{-\frac{1}{2}}\tilde{A}\tilde{D}^{-\frac{1}{2}}$ and $\tilde D^{-1}\tilde A$ is probably because learning filter coefficients from scratch without any constraints is difficult to maintain spectrum properties and therefore easily falls into an ill-posed filter~\cite{he2021bernnet}.
In contrast, $\tilde A_{\rho}$ and $\tilde{D}^{\epsilon}\tilde{A}\tilde{D}^{\epsilon}$ inherently with smoother spectrum alleviate this problem and make them more appropriate in the scenario of learning coefficients from scratch.

\section{Conclusion}

We study the effects of spectrum in GNNs. It shows that in existing architectures, the unsmooth spectrum results in the correlation issue, which acts as the obstacle to developing deep models as well as applying more powerful graph filters. Based on this observation, we propose the correlation-free architecture which decouples the correlation issue from filter design. Then, we show that the spectral characteristics also hinder the approximation abilities of polynomial filters and address it by altering the graph's spectrum.
Our extensive experiments show the significant performance gain of correlation-free architecture with powerful filters.

\section*{Acknowledgments}

This work is supported in part by the National Key Research and Development Program of China (no. 2021ZD0112400), and also in part by the National Natural Science Foundation of China under grants U1811463 and 62072069.

\nocite{langley00}

\bibliography{reference}
\bibliographystyle{icml2022}


\newpage
\appendix
\onecolumn

\section{Derivations of Eq.~\ref{equ:cos_signal} and Eq.~\ref{equ:cos_signal_eigenvalue}}
\label{deriv:equ:cos_signal_eigenvalue}

Since $\mathcal S\in\mathbb R^{n\times n}$ is a symmetric matrix, assume the eigendecomposition $\mathcal S=P\Lambda P^{\top}$ with $P=\bigl(\bm p_1, \bm p_2, \dots, \bm p_n\bigr)$ and $\|\bm p_i\|=1, i\in [n]$.

\begin{equation}
	\nonumber
	\begin{aligned}
		\cos\bigl(\langle \bm h, \bm p_i\rangle\bigr)
		&=\frac{\bm h^{\top} \bm p_i}{\|\bm h\|\|\bm p_i\|}\\
		&=\frac{\bm h^{\top} \bm p_i}{\|\bm h\|}\\
		&=\frac{\bm h^{\top} \bm p_i}{\sqrt{\bm h^{\top} \bm h}}\\
		&=\frac{\bm h^{\top} \bm p_i}{\sqrt{(P^{\top}\bm h)^{\top} P^{\top}\bm h}}\\
		&=\frac{\bm h^{\top}\bm p_i}
		{\sqrt{\sum^n_{j=1}\bigl(\bm p_j^{\top}\bm h\bigr)^2}}\\
		&=\frac{\bm h^{\top}\bm p_i}
		{\sqrt{\sum^n_{j=1}\bigl(\bm h^{\top}\bm p_j\bigr)^2}}\\
		&=\frac{\alpha_i}{\sqrt{\sum^n_{j=1}\alpha_j^2}}.
	\end{aligned}
\end{equation}

\begin{equation}
	\nonumber
	\begin{aligned}
		\cos\bigl(\langle \mathcal S\bm h, \bm p_i\rangle\bigr)&=\frac{\bigl(\mathcal S\bm h\bigr)^{\top} \bm p_i}{\| \mathcal S\bm h\|\|\bm p_i\|}\\
		&=\frac{\bigl(\mathcal S\bm h\bigr)^{\top} \bm p_i}{\sqrt{\bigl(\mathcal S\bm h\bigr)^{\top} \mathcal S\bm h}}\\
		&=\frac{\bigl(P\Lambda\bigl(P^{\top}\bm h\bigr)\bigr)^{\top} \bm p_i}
		{\sqrt{\bigl(P\Lambda\bigl(P^{\top}\bm h\bigr)\bigr)^{\top}\bigl(P\Lambda\bigl(P^{\top}\bm h\bigr)\bigr)}}\\
		&=\frac{\bigl(P^{\top}\bm h\bigr)^{\top}\Lambda P^{\top} \bm p_i}
		{\sqrt{\bigl(P^{\top}\bm h\bigr)^{\top}\Lambda^2\bigl(P^{\top}\bm h\bigr)}}\\
		&=\frac{
			\begin{pmatrix}
				\bm p_1^{\top}\bm h,\dots,\bm p_i^{\top}\bm h,\dots,\bm p_n^{\top}\bm h
			\end{pmatrix}
			\begin{pmatrix}
				\lambda_1 \\
				& \ddots \\
				& & \lambda_i \\
				& & & \ddots \\
				& & & & \lambda_n \\
			\end{pmatrix}
			\begin{pmatrix}
				\bm p_1^{\top} \\
				\vdots \\
				\bm p_i^{\top} \\
				\vdots \\
				\bm p_n^{\top} \\
			\end{pmatrix}
			\begin{pmatrix}
				\bm p_i
			\end{pmatrix}
		}{
			\sqrt{\bigl(P^{\top}\bm h\bigr)^{\top}\Lambda^2\bigl(P^{\top}\bm h\bigr)}} \\
		&=\frac{\bm p_i^{\top}\bm h \lambda_i}
		{\sqrt{\sum^n_{j=1}\bigl(\bm p_j^{\top}\bm h\bigr)^2\lambda_j^2}} \\
		&=\frac{\bm h^{\top}\bm p_i \lambda_i}
		{\sqrt{\sum^n_{j=1}\bigl(\bm h^{\top}\bm p_j\bigr)^2\lambda_j^2}}\\
		&=\frac{\alpha_i\lambda_i}{\sqrt{\sum^n_{j=1}\alpha_j^2\lambda_j^2}}
	\end{aligned}
\end{equation}

\section{Proof of Proposition~\ref{prop:cos_convergence}}
\label{proof:prop:cos_convergence}

\begin{proof}
	(i)
	As $ \mathcal S^k= P\Lambda^k P^{\top}$ and Eq.~\ref{equ:cos_signal_eigenvalue}, for $k=0,1,2,\dots,+\infty$, we have
	\begin{equation}
		\nonumber
		\begin{aligned}
			|\cos(\langle \mathcal S^k\bm h, \bm p_1\rangle)|&=\frac{|\alpha_1\lambda_1^k|}{\sqrt{\sum^n_{i=1}\alpha_i^2\lambda_i^{2k}}}\\
			&=\frac{|\lambda_1|}{|\lambda_1|} \frac{|\alpha_1\lambda_1^k|}{\sqrt{\sum^n_{i=1}\alpha_i^2\lambda_i^{2k}}}\\
			&=\frac{|\alpha_1\lambda_1^{k+1}|}{\sqrt{\lambda_1^2\sum^n_{i=1}\alpha_i^2\lambda_i^{2k}}}\\
			&\leq\frac{|\alpha_1\lambda_1^{k+1}|}{\sqrt{\sum^n_{i=1}\alpha_i^2\lambda_i^{2(k+1)}}}\\
			&=|\cos(\langle \mathcal S^{k+1}\bm h, \bm p_1\rangle)|.
		\end{aligned}
	\end{equation}
	Similarly, we can prove that $|\cos(\langle \mathcal S^k\bm h, \bm p_n\rangle)|\geq |\cos(\langle \mathcal S^{k+1}\bm h, \bm p_n\rangle)|$.
	
	(ii)
	Since $|\cos(\langle \mathcal S^k\bm h, \bm p_n\rangle)|$ monotonously increases with respect to $k$ and has the upper bound 1, $|\cos(\langle \mathcal S^k\bm h, \bm p_n\rangle)|$ must be convergent.
	\begin{equation}
		\nonumber
		\begin{aligned}
			\lim\limits_{k\rightarrow\infty}|\cos(\langle \mathcal S^k\bm h, \bm p_1\rangle)|&=\lim\limits_{k\rightarrow\infty}\frac{|\alpha_1\lambda_1^k|}{\sqrt{\sum^n_{i=1}\alpha_i^2\lambda_i^{2k}}}\\
			&=\lim\limits_{k\rightarrow\infty}\frac{|\alpha_1|}{\sqrt{\alpha_1^2+\sum^n_{i=2}\alpha_i^2\bigl(\frac{\lambda_i}{\lambda_1}\bigr)^{2k}}}\\
			&=\frac{|\alpha_1|}{\sqrt{\alpha_1^2+\lim\limits_{k\rightarrow\infty}\sum^n_{i=2}\alpha_i^2\bigl(\frac{\lambda_i}{\lambda_1}\bigr)^{2k}}}
		\end{aligned}
	\end{equation}
	As $|\lambda_1|>|\lambda_2|\geq,\dots,\geq|\lambda_n|$, we have $\lim\limits_{k\rightarrow\infty}\sum^n_{i=2}\alpha_i^2\bigl(\frac{\lambda_i}{\lambda_1}\bigr)^{2k}=0$ and the convergence speed is decided by $|\frac{\lambda_2}{\lambda_1}|$.
	Therefore $\lim\limits_{k\rightarrow\infty}|\cos(\langle \mathcal S^k\bm h, \bm p_1\rangle)|=1$.
	
	\begin{equation}
		\nonumber
		\begin{aligned}
			\cos\bigl(\langle \mathcal S\bm h, \mathcal S\bm h^{\prime}\rangle\bigr)&=\frac{\bigl(\mathcal S\bm h\bigr)^{\top} \mathcal S\bm h^{\prime}}{\| \mathcal S\bm h\|\| \mathcal S\bm h^{\prime}\|}\\
			&=\frac{\bigl(\mathcal S\bm h\bigr)^{\top} \mathcal S\bm h^{\prime}}
			{\sqrt{\bigl(\mathcal S\bm h\bigr)^{\top} \mathcal S\bm h}\sqrt{\bigl(\mathcal S\bm h^{\prime}\bigr)^{\top} \mathcal S\bm h^{\prime}}}\\
			&=\frac{\bigl(P\Lambda\bigl(P^{\top}\bm h\bigr)\bigr)^{\top} P\Lambda\bigl(P^{\top}\bm h^{\prime}\bigr)}
			{\sqrt{\bigl(P\Lambda\bigl(P^{\top}\bm h\bigr)\bigr)^{\top}\bigl(P\Lambda\bigl(P^{\top}\bm h\bigr)\bigr)}\sqrt{\bigl(P\Lambda\bigl(P^{\top}\bm h^{\prime}\bigr)\bigr)^{\top}\bigl(P\Lambda\bigl(P^{\top}\bm h^{\prime}\bigr)\bigr)}}\\
			&=\frac{\bigl(P^{\top}\bm h\bigr)^{\top}\Lambda^2 P^{\top}\bm h^{\prime}}
			{\sqrt{\bigl(P^{\top}\bm h\bigr)^{\top}\Lambda^2\bigl(P^{\top}\bm h\bigr)}\sqrt{\bigl(P^{\top}\bm h^{\prime}\bigr)^{\top}\Lambda^2\bigl(P^{\top}\bm h^{\prime}\bigr)}}\\
			&=\frac{\boldsymbol\alpha^{\top}\Lambda^2\boldsymbol\beta}
			{\sqrt{\boldsymbol\alpha^{\top}\Lambda^2\boldsymbol\alpha}\sqrt{\boldsymbol\beta^{\top}\Lambda^2\boldsymbol\beta}}\\
			&=\frac{\sum^n_{i=1}\alpha_i\beta_i\lambda_i^2}{\sqrt{\sum^n_{i=1}\alpha_i^2\lambda_i^2}\sqrt{\sum^n_{i=1}\beta_i^2\lambda_i^2}}
		\end{aligned}
	\end{equation}
	Then, 
	\begin{equation}
		\nonumber
		\begin{aligned}
			\lim\limits_{k\rightarrow\infty}|\cos(\langle \mathcal S^k\bm h, \mathcal S^k\bm h^{\prime}\rangle)|&=\lim\limits_{k\rightarrow\infty}\frac{\bigl|\sum^n_{i=1}\alpha_i\beta_i\lambda_i^{2k}\bigr|}{\sqrt{\sum^n_{i=1}\alpha_i^2\lambda_i^{2k}}\sqrt{\sum^n_{i=1}\beta_i^2\lambda_i^{2k}}}\\
			&=\lim\limits_{k\rightarrow\infty}\frac{\bigl|\sum^n_{i=1}\alpha_i\beta_i\frac{\lambda_i}{\lambda_1}^{2k}\bigr|}{\sqrt{\sum^n_{i=1}\alpha_i^2\frac{\lambda_i}{\lambda_1}^{2k}}\sqrt{\sum^n_{i=1}\beta_i^2\frac{\lambda_i}{\lambda_1}^{2k}}}\\			&=\lim\limits_{k\rightarrow\infty}\frac{\bigl|\alpha_1\beta_1+\sum^n_{i=2}\alpha_i\beta_i\frac{\lambda_i}{\lambda_1}^{2k}\bigr|}{\sqrt{\alpha_1^2+\sum^n_{i=2}\alpha_i^2\frac{\lambda_i}{\lambda_1}^{2k}}\sqrt{\beta_1^2+\sum^n_{i=2}\beta_i^2\frac{\lambda_i}{\lambda_1}^{2k}}}\\
			&=\frac{\bigl|\alpha_1\beta_1\bigr|}{\sqrt{\alpha_1^2}\sqrt{\beta_1^2}}\\
			&=1
		\end{aligned}
	\end{equation}
\end{proof}

\section{More Discussions of Spectral Optimization on Filter Basis}
\label{more_filter_basis}

We use $\textrm E_{(S, \lambda)}$ to denote the eigenspace of $S$ associated with $\lambda$ such that $\textrm E_{(S, \lambda)}=\{\bm v: (S-\lambda I)\bm v=\bm 0\}$.

\begin{proposition}
	\label{prop:eig_mapping}
	Given a symmetric matrix $S\in\mathbb R^{n\times n}$ with $S=P\Lambda P^{\top}$ where $\Lambda=\textrm{diag}(\lambda_1, \lambda_2, \dots, \lambda_n)$, and $P$ can be any eigenbasis of $S$,
	let $S_{\phi}=P\phi(\Lambda)P^{\top}$, where $\phi(\cdot)$ is an entry-wise function applied on $\Lambda$. Then we have \\
	(i) $\textrm E_{(S, \lambda_i)}\subseteq\textrm E_{(S_{\phi}, \phi(\lambda_i))}, i\in[n]$; \\
	(ii) Meanwhile, if $\phi(\cdot)$ is injective, $\textrm E_{(S, \lambda_i)}=\textrm E_{(S_{\phi}, \phi(\lambda_i))}$ and $\mathcal F_{\phi}(S)=P\phi(\Lambda)P^{\top}$ is injective.
\end{proposition}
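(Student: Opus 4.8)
The plan is to work directly from the spectral decomposition $\mathring S=P\phi(\Lambda)P^T$. First note that $\phi(\Lambda)$ is diagonal, hence symmetric, so $\mathring S^T=P\phi(\Lambda)^TP^T=\mathring S$ and the displayed expression is a genuine eigendecomposition of $\mathring S$: its eigenpairs are $(\phi(\lambda_j),p_j)$, where $p_j$ is the $j$-th column of $P$. The spectral theorem then gives explicit descriptions of both families of eigenspaces: if $p_j$ denotes the orthonormal eigenvectors, then $\textrm E_{(S,\lambda_i)}=\mathrm{span}\{p_j:\lambda_j=\lambda_i\}$, and applying the same reasoning to $\mathring S$, $\textrm E_{(\mathring S,\mu)}=\mathrm{span}\{p_j:\phi(\lambda_j)=\mu\}$ for any $\mu$ in the spectrum of $\mathring S$.

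For (i), I would take any index $j$ with $\lambda_j=\lambda_i$ and compute $\mathring S p_j=P\phi(\Lambda)P^Tp_j=\phi(\lambda_j)p_j=\phi(\lambda_i)p_j$, so $p_j\in\textrm E_{(\mathring S,\phi(\lambda_i))}$. Since such $p_j$ span $\textrm E_{(S,\lambda_i)}$, the inclusion $\textrm E_{(S,\lambda_i)}\subseteq\textrm E_{(\mathring S,\phi(\lambda_i))}$ follows. The inclusion can be strict precisely because $\phi$ may send distinct eigenvalues to a common value, which is exactly what the injectivity hypothesis of (ii) rules out.

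For the reverse inclusion in (ii): if $\phi$ is injective then $\{j:\phi(\lambda_j)=\phi(\lambda_i)\}=\{j:\lambda_j=\lambda_i\}$, so $\textrm E_{(\mathring S,\phi(\lambda_i))}=\mathrm{span}\{p_j:\lambda_j=\lambda_i\}=\textrm E_{(S,\lambda_i)}$, giving equality. It remains to argue that $\mathcal F_\phi$ is well defined (independent of the chosen eigenbasis $P$) and injective. Well-definedness follows by grouping terms by distinct eigenvalue: $P\phi(\Lambda)P^T=\sum_\mu\phi(\mu)\,\Pi_{(S,\mu)}$, where $\Pi_{(S,\mu)}$ is the orthogonal projector onto $\textrm E_{(S,\mu)}$, a quantity determined by $S$ alone.

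For injectivity, suppose $\mathcal F_\phi(S_1)=\mathcal F_\phi(S_2)=:M$ for symmetric $S_1,S_2$. Let $\Sigma$ be the finite union of the spectra of $S_1$ and $S_2$; since $\phi$ is injective it maps $\Sigma$ bijectively onto a set of distinct reals, so by Lagrange interpolation there is a polynomial $\psi$ with $\psi(\phi(\lambda))=\lambda$ for every $\lambda\in\Sigma$. Writing $\mathcal F_\phi(S_t)=P_t\phi(\Lambda_t)P_t^T$ and using the polynomial functional calculus, $\psi(M)=\psi(\mathcal F_\phi(S_t))=P_t\,\psi(\phi(\Lambda_t))\,P_t^T=P_t\Lambda_tP_t^T=S_t$ for $t=1,2$, so $S_1=\psi(M)=S_2$. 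The bookkeeping in (i) and (ii) is routine once the two eigenspace descriptions are in place; the only step requiring a real idea is this last one, where one must recognize that on the combined finite spectrum a single interpolating polynomial inverts $\phi$, and must be careful to interpolate over the union $\Sigma$ so that the same polynomial simultaneously recovers $S_1$ and $S_2$.
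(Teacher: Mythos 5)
Your argument is correct, and parts (i) and the eigenspace equality in (ii) proceed exactly as in the paper: both identify $\textrm E_{(S,\lambda_i)}=\mathrm{span}\{\bm p_j:\lambda_j=\lambda_i\}$ and $\textrm E_{(\mathring S,\phi(\lambda_i))}=\mathrm{span}\{\bm p_j:\phi(\lambda_j)=\phi(\lambda_i)\}$ and compare the index sets. Where you genuinely diverge is the injectivity of $\mathcal F_{\phi}$. The paper splits into two cases: if $\sigma(S)\neq\sigma(B)$ it notes that injectivity of $\phi$ forces $\sigma(\mathring S)\neq\sigma(\mathring B)$, hence distinct characteristic polynomials; if $\sigma(S)=\sigma(B)$ it proves the contrapositive $\mathring S=\mathring B\Rightarrow S=B$ by invoking uniqueness of the eigendecomposition in terms of eigenspaces together with the already-established equality $\textrm E_{(S,\lambda_i)}=\textrm E_{(\mathring S,\phi(\lambda_i))}$. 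You instead build, by Lagrange interpolation over the union $\Sigma$ of the two spectra, a single polynomial $\psi$ with $\psi(\phi(\lambda))=\lambda$ on $\Sigma$, and recover $S_t=\psi(\mathcal F_{\phi}(S_t))$ by functional calculus. Your route is shorter, avoids the case split, does not even need the eigenspace equality of (ii) as an ingredient, and exhibits an explicit left inverse of $\mathcal F_{\phi}$ on any finite family of symmetric matrices; the care you take to interpolate over the combined spectrum $\Sigma$ (so that one $\psi$ works for both $S_1$ and $S_2$) is exactly the point that makes it sound. Your projector identity $\mathring S=\sum_{\mu}\phi(\mu)\Pi_{(S,\mu)}$ also gives a self-contained proof of basis-independence, which the paper obtains only in the surrounding text as a corollary of (i). The paper's argument, in exchange, stays at the level of elementary facts about characteristic polynomials and eigenspaces, without appealing to polynomial functional calculus.
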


\begin{proof}
	
	Let $P=(\bm p_1, \bm p_2, \dots, \bm p_n)$.
	$S=P\Lambda P^{\top}$ is equivalent to $S\bm p_i=\lambda_i\bm p_i, i\in[n]$.
	For any $i\in[n]$, the geometric multiplicity of any $\lambda_i$ is equal to its algebraic multiplicity, and $\textrm E_{(S, \lambda_i)}=\textrm{Span}(\{\bm p_k|\lambda_k=\lambda_i, k\in[n]\})$.
	$S_{\phi}=P\phi(\Lambda)P^{\top}$ and $S_{\phi}\bm p_i=\phi(\lambda_i)\bm p_i, i\in[n]$.
	Similarly, for any $i\in[n]$, $\textrm E_{(S_{\phi}, \phi(\lambda_i))}=\textrm{Span}(\{\bm p_k|\phi(\lambda_k)=\phi(\lambda_i), k\in[n]\})$.
	Note that $\{\bm p_k|\lambda_k=\lambda_i, k\in[n]\}\subseteq\{\bm p_k|\phi(\lambda_k)=\phi(\lambda_i), k\in[n]\}$ for any $i\in[n]$.
	Hence $\textrm{Span}(\{\bm p_k|\lambda_k=\lambda_i, k\in[n]\})\subseteq\textrm{Span}(\{\bm p_k|\phi(\lambda_k)=\phi(\lambda_i), k\in[n]\}$.
	As a result, $\textrm E_{(S, \lambda_i)}\subseteq\textrm E_{(S_{\phi}, \phi(\lambda_i))}$ for any $i\in[n])$.
	
	If $\phi(\cdot)$ is injective, $\{\bm p_k|\lambda_k=\lambda_i, k\in[n]\}=\{\bm p_k|\phi(\lambda_k)=\phi(\lambda_i), k\in[n]\}$ for any $i\in[n]$.
	Thus $\textrm E_{(S, \lambda_i)}=\textrm E_{(S_{\phi}, \phi(\lambda_i))}$.
	
	We use $\sigma(S)$ to denote the generalisation of the set of all eigenvalues of $S$ (Slso known as the spectrum of $S$).
	Let $S=P\Lambda_1 P^{\top}$ and $B=Q\Lambda_2 Q^{\top}$.
	Suppose $S\neq B$,
	to prove $S_{\phi}=\mathcal F_{\phi}(S)\neq B_{\phi}=\mathcal F_{\phi}(B)$, we discuss two cases respectively.
	
	\textbf{Case 1:} $\sigma(S)\neq\sigma(B)$
	
	Then $\sigma(S_{\phi})\neq\sigma(B_{\phi})$.
	The characteristic polynomials of $S_{\phi}$ and $B_{\phi}$ are different.
	Therefore, $S_{\phi}\neq B_{\phi}$.
	
	\textbf{Case 2:} $\sigma(S)=\sigma(B)$
	
	Then $\Lambda_1=\Lambda_2=\Lambda$.
	We prove the equivalent proposition "$S_{\phi}=B_{\phi}\Rightarrow S=B$".
	If $S_{\phi}=B_{\phi}$, $P\phi(\Lambda)P^{\top}=Q\phi(\Lambda)Q^{\top}$.
	For any $\lambda_i$ with geometric multiplicity $k$, we can find the corresponding eigenvectors $\bm p_1, \bm p_2, \dots, \bm p_k$ according to $P\phi(\Lambda)P^{\top}$.
	Similarly, we can find the corresponding eigenvectors $\bm q_1, \bm q_2, \dots, \bm q_k$ according to $Q\phi(\Lambda)Q^{\top}$.
	Note that the eigen-decomposition is unique in terms of eigenspaces.
	Thus, $\textrm E_{(S_{\phi}, \phi(\lambda_i))}=\textrm{Span}(\bm p_1, \bm p_2, \dots, \bm p_k)=\textrm{Span}(\bm q_1, \bm q_2, \dots, \bm q_k)=\textrm E_{(B_{\phi}, \phi(\lambda_i))}$.
	Therefore, for any $\lambda_i$, $\textrm E_{(S, \lambda_i)}=\textrm E_{(B, \lambda_i)}$ (As given in Proposition~\ref{prop:eig_mapping}).
	Correspondingly, $S=P\Lambda P^{\top}=Q\Lambda Q^{\top}=B$.
	
\end{proof}

Proposition~\ref{prop:eig_mapping} shows that the eigenspace of $S_{\phi}$ involves the eigenspace of $S$.
Therefore, $S_{\phi}$ is invariant to the choice of eigenbasis, i.e., $S_{\phi}=P\phi(\Lambda)P^{\top}=P^{\prime}\phi(\Lambda)P^{\prime \top}$ for any eigenbases $P$ and $P^{\prime}$ of $S$.
Hence, $S_{\phi}$ is unique to $S$ for a given $\phi(\cdot)$.
Consistently, we denote the mapping $\mathcal F_{\phi}(S)=\mathcal F_{\phi}(P\Lambda P^{\top})=P\phi(\Lambda)P^{\top}$.

When $\mathcal F_{\phi}$ is injective, $\mathcal F_{\phi}(S)$ and $S$ share the same algebraic multiplicity.
Otherwise, $\mathcal F_{\phi}(S)$ has a larger algebraic multiplicity on the corresponding eigenvalues, which may weaken the approximation ability based on the understanding of Vandermonde matrix.
Also, the injectivity of $\mathcal F_{\phi}$ serves as a guarantee that the transformation is reversible with no information loss.

$\mathcal F_{\phi}(\cdot)$ is also equivariant to graph isomorphism.
For any two graphs $G_1$ and $G_2$ with matrix representations $S_1$ and $S_2$ (e.g., adjacency matrix, Laplacian matrix, etc.), $G_1$ and $G_2$ are isomorphic if and only if there exists a permutation matrix $M$ such that $MS_1 M^{\top}=S_2$.
We denote $I(S)=MSM^{\top}$.
Then
\begin{claim}
	$\mathcal F_{\phi}(\cdot)$ is equivariant to graph isomorphism, i.e. $\mathcal F_{\phi}(I(S))=I(\mathcal F_{\phi}(S))$.
\end{claim}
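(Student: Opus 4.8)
The plan is to reduce the claim to the well-definedness of $\mathcal F_\phi$ that was already extracted from Proposition~\ref{prop:eig_mapping}: namely, that $\mathcal F_\phi(S)=P\phi(\Lambda)P^T$ does not depend on which eigenbasis $P$ of $S$ is used, so $\mathcal F_\phi$ is a genuine function of the matrix $S$ rather than of a chosen decomposition. Granting this, the equivariance follows by manufacturing a convenient eigendecomposition of $I(S)=MSM^T$ out of one for $S$.

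First I would record two elementary facts. A permutation matrix $M$ is orthogonal, i.e. $MM^T=M^TM=I$; and $I(S)=MSM^T$ is symmetric whenever $S$ is, since $(MSM^T)^T=MS^TM^T=MSM^T$. Hence $\mathcal F_\phi$ is defined on $I(S)$ and the setup of Proposition~\ref{prop:eig_mapping} applies to it as well.

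Next, starting from $S=P\Lambda P^T$ with $P$ orthogonal, I would write
\[
I(S)=MSM^T=MP\,\Lambda\,P^TM^T=(MP)\,\Lambda\,(MP)^T .
\]
Since $MP$ is a product of orthogonal matrices it is orthogonal, so $(MP)\Lambda(MP)^T$ is a legitimate orthogonal eigendecomposition of $I(S)$ with the same eigenvalue matrix $\Lambda$. By the eigenbasis-independence of $\mathcal F_\phi$ this gives
\[
\mathcal F_\phi(I(S))=(MP)\,\phi(\Lambda)\,(MP)^T=M\bigl(P\phi(\Lambda)P^T\bigr)M^T=M\,\mathcal F_\phi(S)\,M^T=I(\mathcal F_\phi(S)),
\]
which is exactly the asserted equivariance.

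The only step that deserves care — and the one I would spell out most explicitly — is the eigenbasis-independence of $\mathcal F_\phi$: when an eigenvalue is repeated, rotating or permuting the eigenvectors spanning its eigenspace must leave $P\phi(\Lambda)P^T$ unchanged, which holds because $\phi$ assigns equal entries of $\Lambda$ the same value and therefore acts as a single scalar on the spectral projector of each eigenspace. This is precisely the consequence of part (i) of Proposition~\ref{prop:eig_mapping} that the text already invokes to declare $\mathring S$ well-defined, so in the write-up I would cite that rather than reprove it; everything else is the one-line orthogonal-conjugation computation displayed above.
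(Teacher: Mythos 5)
Your proof is correct and follows essentially the same route as the paper's: conjugating the eigendecomposition $S=P\Lambda P^T$ by $M$ to obtain $(MP)\Lambda(MP)^T$ as an eigendecomposition of $I(S)$ and then pushing $\phi$ through. You are in fact slightly more careful than the paper, since you explicitly justify the step $\mathcal F_{\phi}\bigl((MP)\Lambda(MP)^T\bigr)=(MP)\phi(\Lambda)(MP)^T$ via orthogonality of $MP$ and the eigenbasis-independence of $\mathcal F_{\phi}$ drawn from Proposition~\ref{prop:eig_mapping}, which the paper's displayed computation uses implicitly.
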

\begin{proof}
	\begin{equation}
		\nonumber
		\begin{aligned}
			\mathcal F_{\phi}(I(S))&=\mathcal F_{\phi}(MSM^{\top})\\
			&=\mathcal F_{\phi}(M(P\Lambda P^{\top})M^{\top})\\
			&=\mathcal F_{\phi}((MP)\Lambda(MP)^{\top})\\
			&=(MP)\phi(\Lambda)(MP)^{\top}\\
			&=M(P\phi(\Lambda)P^{\top})M^{\top}\\
			&=I(\mathcal F_{\phi}(S))
		\end{aligned}
	\end{equation}
\end{proof}
Hence, for a specific GNN model $f_{\textrm{GNN}}$, $f_{\textrm{GNN}}(\mathcal F_{\phi}(I(S)))=f_{\textrm{GNN}}(I(\mathcal F_{\phi}(S)))=f_{\textrm{GNN}}(\mathcal F_{\phi}(S))$.
The learned representation is invariant to graph isomorphism (also known as permutation invariance~\cite{NIPS2017_f22e4747,murphy2018janossy}) when introducing $\mathcal F_{\phi}(\cdot)$.

\section{Proof of Proposition~\ref{prop:basis_norm}}
\label{proof:prop:basis_norm}

\begin{proof}
	Let $\mathring A=(D+\eta I)^{\epsilon}(A+\eta I)(D+\eta I)^{\epsilon}$.
	According to Courant-Fischer theorem,
	\begin{equation}
		\nonumber
		\begin{aligned}
			\mu_i=\min_{\mathrm{dim}(S)=i} \max_{x\in S}\frac{\bm x^{\top}\mathring A\bm x}{\bm x^{\top}\bm x}.
		\end{aligned}
	\end{equation}
	Let $\bm y=(D+\eta I)^{\epsilon}\bm x$. As the change of variables $\bm y=(D+\eta I)^{\epsilon}\bm x$ is non-singular, this is equivalent to
	\begin{equation}
		\nonumber
		\begin{aligned}
			\mu_i=\min_{\mathrm{dim}(T)=i} \max_{\bm y\in T}\frac{\bm y^{\top}(A+\eta I)\bm y}{\bm y^{\top}(D+\eta I)^{-2\epsilon}\bm y}.
		\end{aligned}
	\end{equation}
	Therefore,
	\begin{equation}
		\nonumber
		\begin{aligned}
			\mu_i&=\min_{\mathrm{dim}(T)=i} \max_{\bm y\in T}\frac{\bm y^{\top}(A+\eta I)\bm y}{\bm y^{\top}(D+\eta I)^{-2\epsilon}\bm y}\\
			&\geq\min_{\mathrm{dim}(T)=i} \max_{\bm y\in T}\frac{\bm y^{\top}(A+\eta I)\bm y}{(d_{\mathrm{max}}+\eta)^{-2\epsilon}\bm y^{\top}\bm y}\\
			&=(d_{\mathrm{max}}+\eta)^{2\epsilon}\bigl(\min_{\mathrm{dim}(T)=i} \max_{\bm y\in T}\frac{\bm y^{\top}A\bm y}{\bm y^{\top}\bm y}+\eta\bigr)\\
			&=(\lambda_i+\eta)(d_{\mathrm{max}}+\eta)^{2\epsilon}.
		\end{aligned}
	\end{equation}
	Similarly, we can prove $\mu_i\leq(\lambda_i+\eta)(d_{\mathrm{min}}+\eta)^{2\epsilon}$.
\end{proof}

\FloatBarrier
\section{Visualizations of the Effects of the Normalization $\tilde D^{\epsilon}\tilde A\tilde D^{\epsilon}$ on the Spectrum}
\label{spectrum_visualizations}

\begin{figure}[h]
	\centering
	\includegraphics[width=0.95\textwidth]{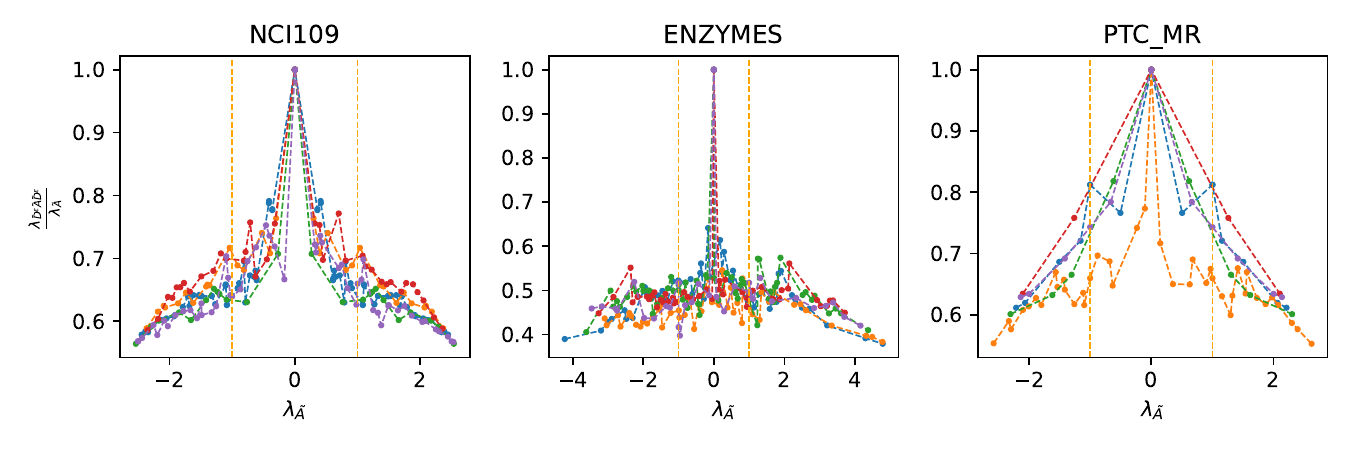}
	\vspace{-10pt}
	\caption{We randomly sample 5 graphs in each of three datasets NCI109, ENZYMES and PTC\_MR respectively. And we use the fixed $\epsilon=-0.3$ to see the effects of the normalization on all graphs.}
	\label{fig:spectrum_visualizations}
\end{figure}

\FloatBarrier
\section{Visualizations of the Learned Filters}
\label{filter_visualizations}

\begin{figure}[h]
	\centering
	\includegraphics[width=0.95\textwidth]{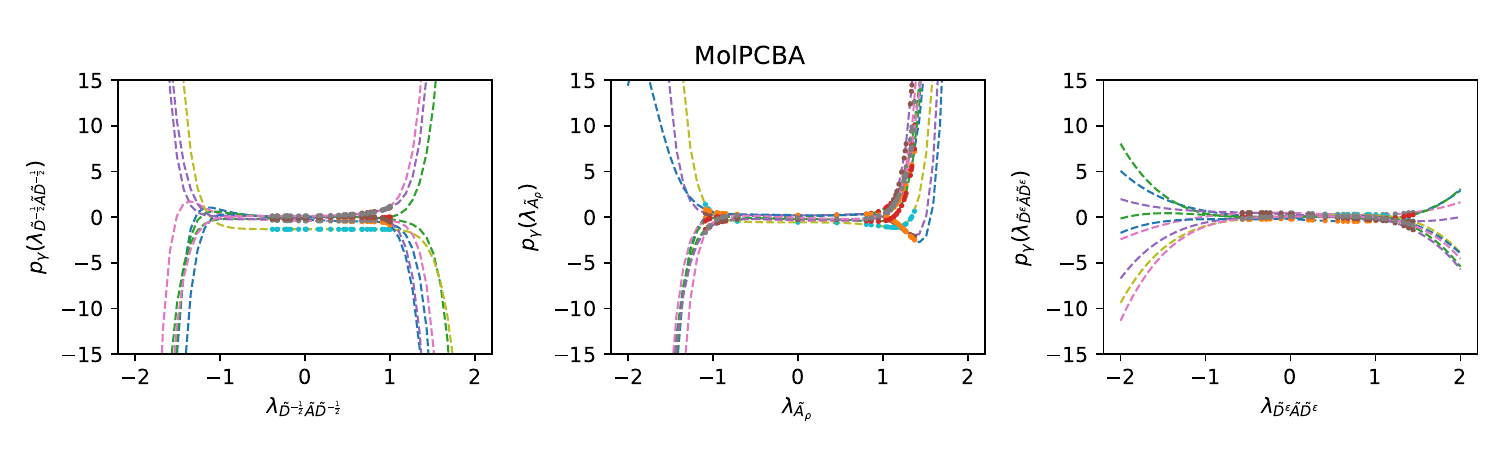}
	\includegraphics[width=0.95\textwidth]{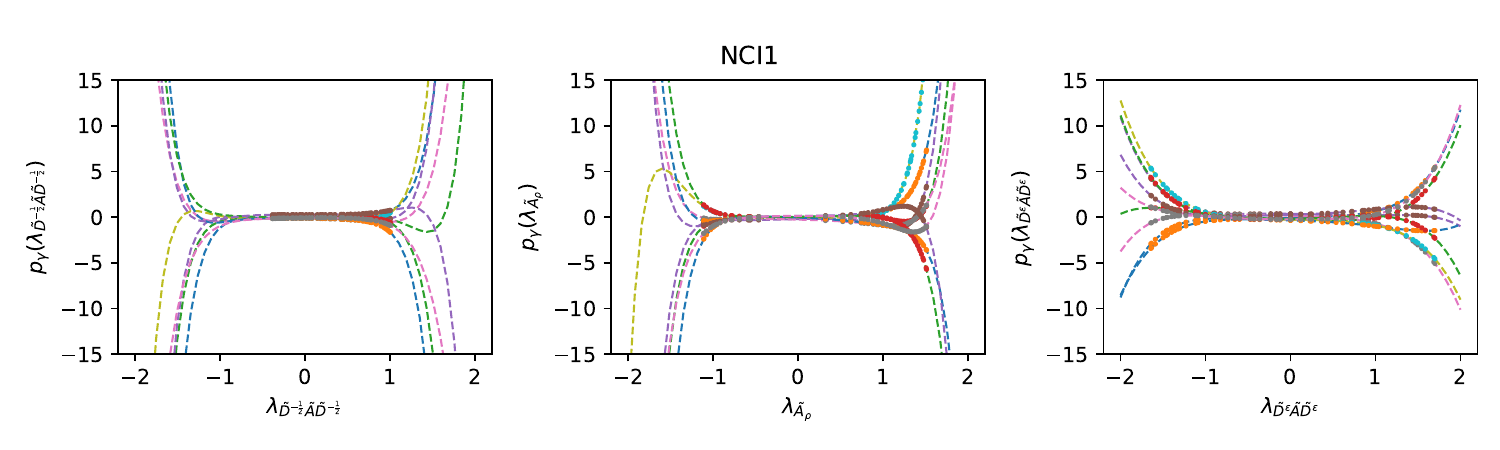}
\end{figure}
\begin{figure}[h]
	\includegraphics[width=0.95\textwidth]{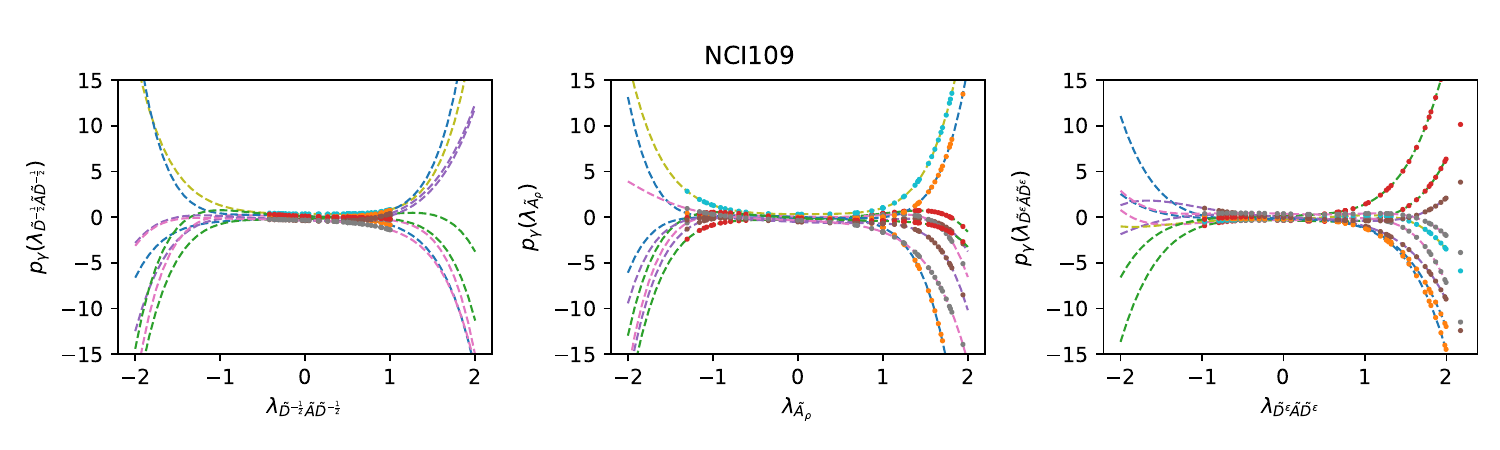}
	\includegraphics[width=0.95\textwidth]{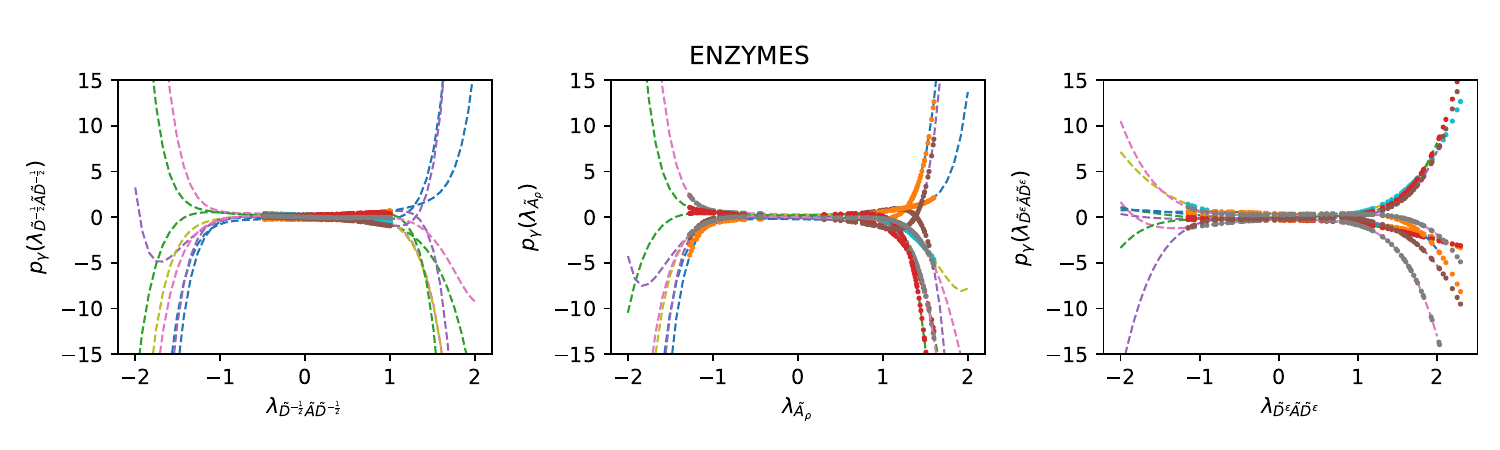}
	\includegraphics[width=0.95\textwidth]{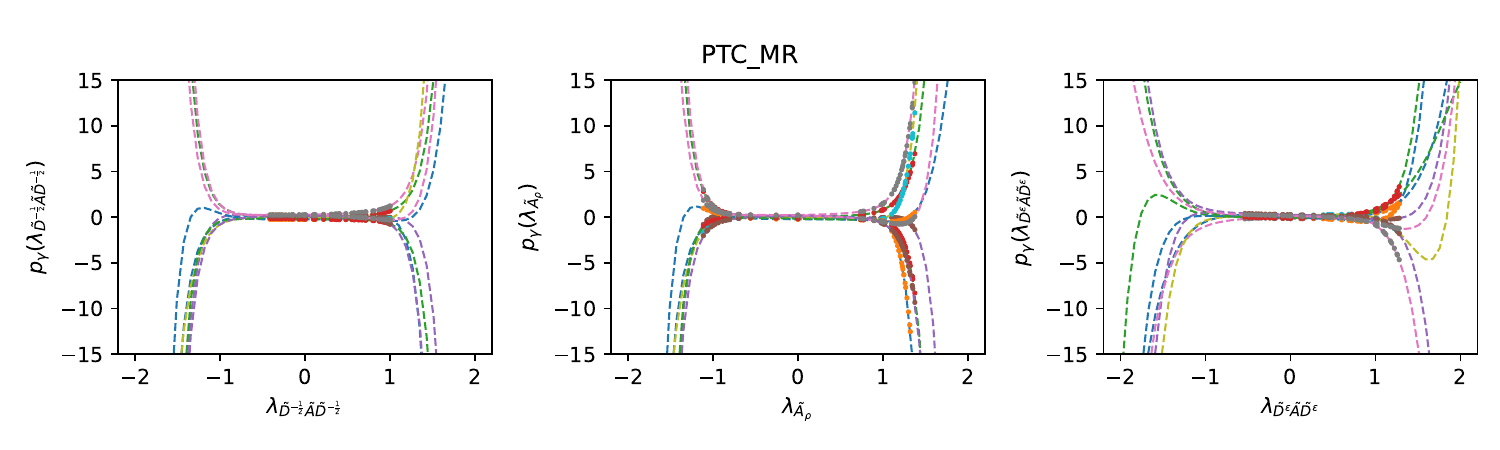}
	\caption{Visualizations of the learned filters on MolPCBA, NCI1, NCI109, ENZYMES and PTC\_MR.}
	\label{fig:filter_visualizations1}
\end{figure}

\FloatBarrier
\section{The Correlation Issue of Deep Models}
\label{ablate_deep}

\begin{figure}[ht]
	\centering
	\includegraphics[width=0.5\textwidth]{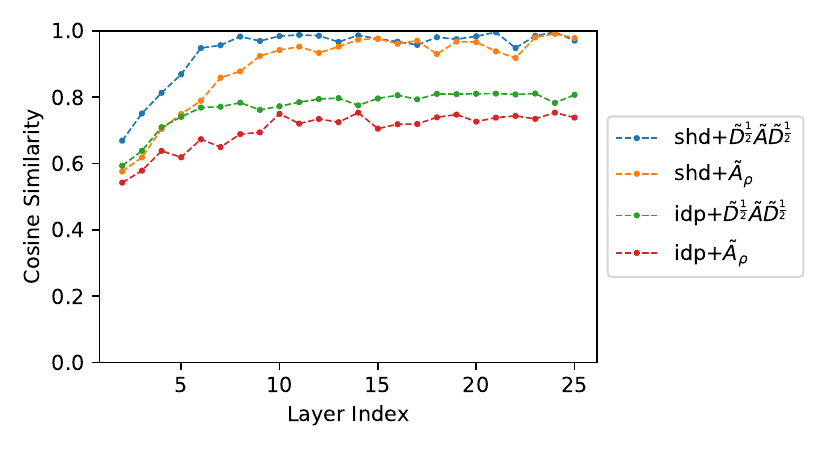}
	\vspace{-10pt}
	\caption{Cosine similarities on ZINC.}
	\label{fig:zinc_cosine}
\end{figure}
We test the absolute value of cosine similarities in different layers for a depth=25 model. For each graph, we compute the mean of all hidden signal pairs. The final visualized results in Fig.\ref{fig:zinc_cosine} are the mean of all graphs within a randomly selected batch. To be consistent with the definition of spectral graph convolution as well as our correlation analysis, the test runs do not utilize edge features of ZINC.

The results show that on both bases, the cosine of the shared filter case converges to 1, while the correlation-free converges to $0.8$ for $\tilde D^{\frac{1}{2}}\tilde A\tilde D^{\frac{1}{2}}$ and $0.7$ for $\tilde A_{\rho}$.
(We also found that it easily leads to a large cosine similarity on ZINC, which is mainly because graphs are small such that $n<<d$, where $n$ is the number of nodes and $d$ is the number of hidden features.)
These results do show that general GNNs suffer from the correlation issue as depth increases, while our correlation-free architecture enjoys a relatively stable performance.

\FloatBarrier
\section{More Results}
\label{more_results}

\begin{figure}[h]
	\centering
	\includegraphics[width=0.95\textwidth]{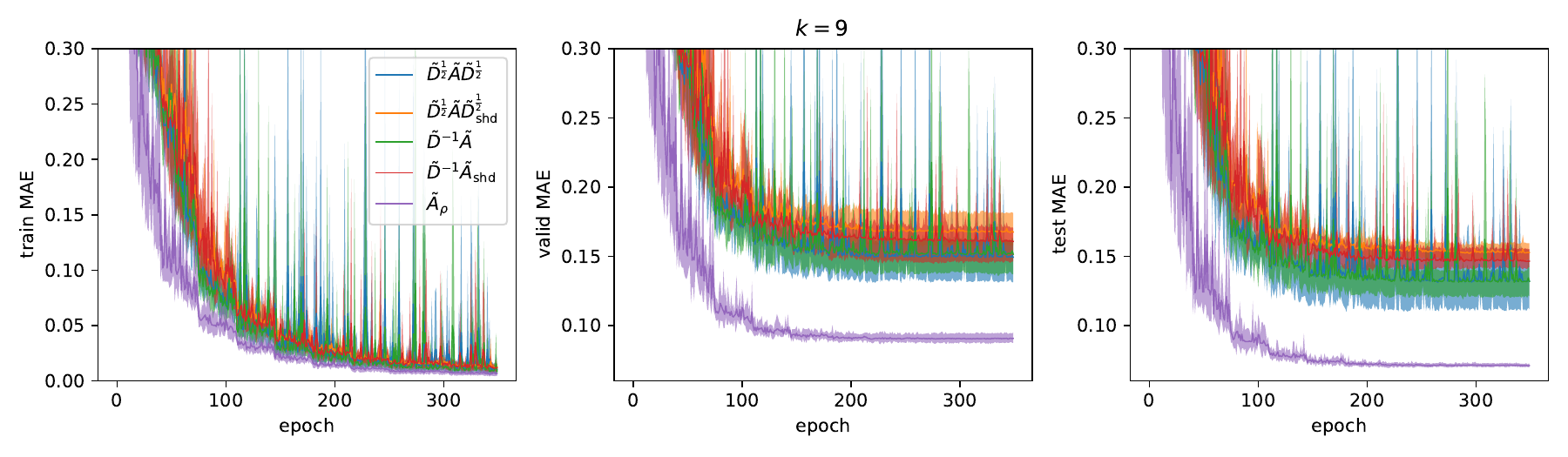}
	\includegraphics[width=0.95\textwidth]{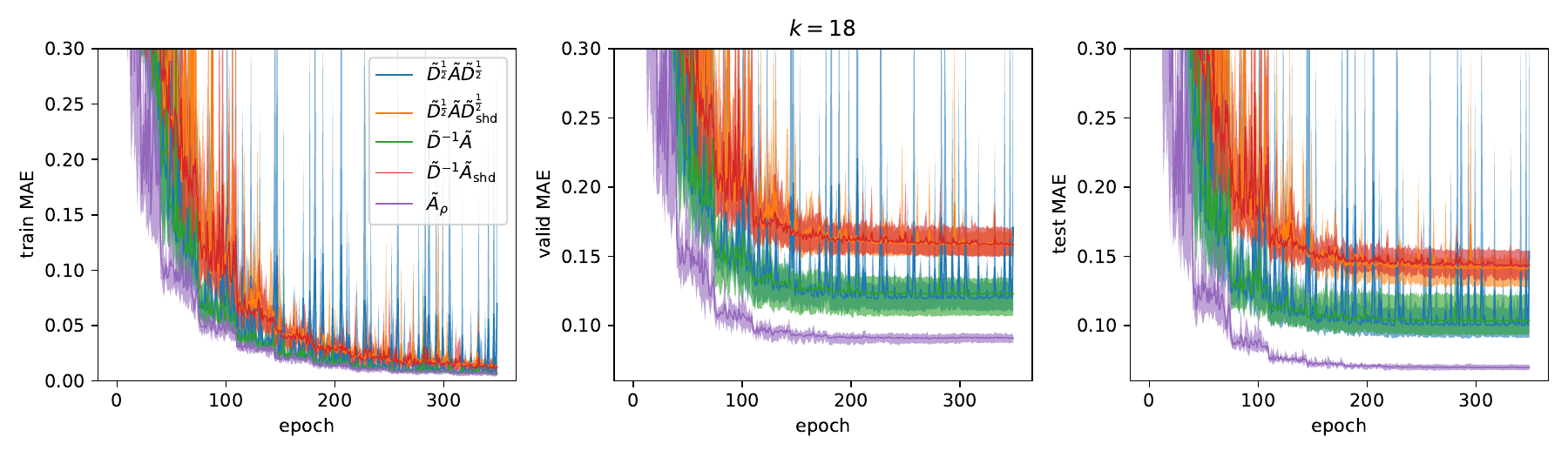}
\end{figure}
\begin{figure}[h]
	\includegraphics[width=0.95\textwidth]{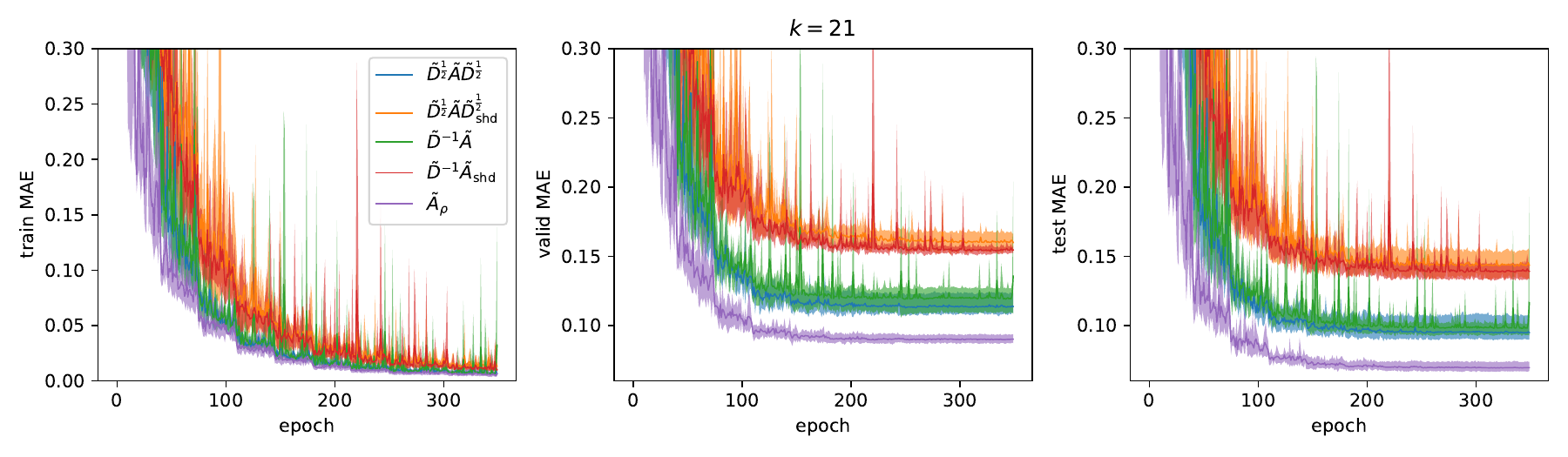}
	\includegraphics[width=0.95\textwidth]{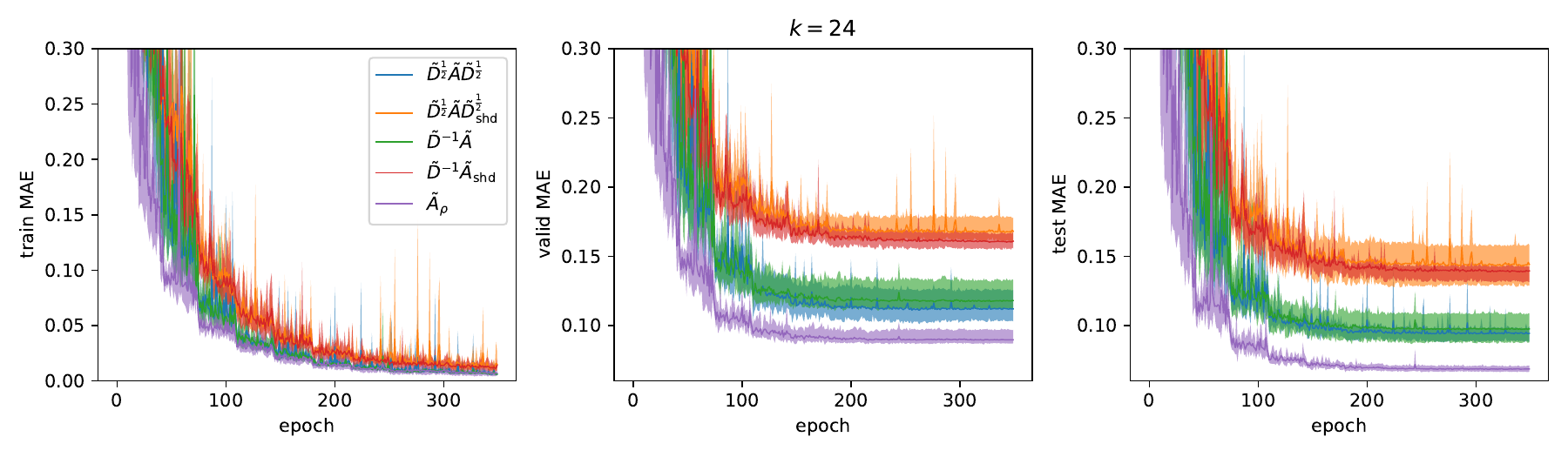}
	\caption{The curves of 5 runs on ZINC with the number of basis $k=9, 18, 21, 24$.}
	\label{fig:more_results1}
\end{figure}

\end{document}